\documentclass[10pt]{article} 
\usepackage[preprint]{tmlr}

\usepackage{amsmath,amsfonts,bm}

\def\eqref#1{equation~\ref{#1}}

\def\1{\bm{1}}

\DeclareMathAlphabet{\mathsfit}{\encodingdefault}{\sfdefault}{m}{sl}
\SetMathAlphabet{\mathsfit}{bold}{\encodingdefault}{\sfdefault}{bx}{n}

\DeclareMathOperator*{\argmax}{arg\,max}
\DeclareMathOperator*{\argmin}{arg\,min}

\usepackage{hyperref}
\usepackage{url}
\usepackage{graphicx}
\usepackage{booktabs}
\usepackage{amsmath}
\usepackage{amssymb}
\usepackage{amsfonts}
\usepackage{amsthm}
\usepackage{float}
\usepackage{multirow}
\usepackage{longtable}
\usepackage{algorithm}
\usepackage{algpseudocode}

\newcommand{\Pred}{\mathrm{Pred}}
\newcommand{\Feat}{\mathrm{Feat}}
\providecommand{\argmin}{\mathop{\mathrm{argmin}}\limits}
\providecommand{\argmax}{\mathop{\mathrm{argmax}}\limits}
\newtheorem{proposition}{Proposition}

\title{Dynamic Multi-Step Decoding for Time-Series Foundation Models}

\author{\name Birmjune Kim\thanks{Equal contribution.} \email birmjunekim@snu.ac.kr \\
      \addr Department of Computer Science and Engineering, Seoul National University
      \AND
      \name Hyungju Lee\footnotemark[1] \email dlgudwn1219@snu.ac.kr \\
      \addr Department of Computer Science and Engineering, Seoul National University
      \AND
      \name Hyun Oh Song \email hyunoh@snu.ac.kr \\
      \addr Department of Computer Science and Engineering, Seoul National University\\
      Interdisciplinary Program in Artificial Intelligence, Seoul National University}

\def\month{10}
\def\year{2026}
\def\openreview{\url{https://openreview.net/forum?id=XXXX}}

\begin{document}

\maketitle

\begin{abstract}
Time-series forecasting models typically use a fixed decoding strategy at inference time: either one-shot prediction of the full horizon or autoregressive rollout over atomic chunks. While one-shot decoding reduces error accumulation, it can discard useful sequential structure; conversely, autoregressive decoding preserves temporal dependencies but may compound errors. We argue that the optimal decoding strategy is not fixed across inputs, but should be selected adaptively for each forecasting instance. We propose a dynamic strategy selector that treats the forecast rollout schedule as an inference-time decision problem for frozen pretrained time-series foundation models. 
The selector is a learned network that ranks candidate rollout schedules using hidden states of the backbone, and is trained on historical windows of the evaluated tasks for each backbone. To make the candidate set tractable over the exponentially large space of horizon partitions, we use dynamic programming that reduces candidate evaluation to quadratic complexity while preserving much of the oracle gain.
At deployment, five independently trained selectors vote on each decision, and the policy follows their consensus, falling back to the backbone's one-shot forecast where they disagree. The construction requires only the backbone's hidden states and lightweight selectors, with no fine-tuning of the forecasting backbone. 
On fev-bench with a frozen Chronos-2 backbone, agreement-gated selection improves point-forecast skill (MASE) over one-shot decoding with statistical significance, mostly on longer horizons. It recovers about 6--8\% of the gain that the same dynamic program attains with ground-truth selection. With a frozen Moirai-1.1-R-small backbone, the improvement is also significant on the full benchmark, and for both backbones we claim gains in point accuracy only.
\end{abstract}

\section{Introduction}

Time-series forecasting, the task of predicting future values from a
historical look-back window, is one of the most fundamental problems in
time series analysis. The field has progressed from classical statistical
methods such as ARIMA and GARCH~\citep{box1970time,bollerslev1986generalized}
to sophisticated deep learning architectures, including temporal
convolutional networks~\citep{bai2018tcn} and Transformer-based
forecasters~\citep{zhou2021informer,wu2021autoformer,zhou2022fedformer}.
Most recently, the paradigm has shifted toward Transformer-based foundation
models such as Chronos-2~\citep{ansari2025chronos2},
TimesFM~\citep{das2024timesfm}, and Moirai~\citep{woo2024moirai}, which leverage large-scale pretraining to
achieve zero-shot generalization. Despite these architectural advances and
the success of pretraining, the impact of inference strategies, specifically
the granularity at which forecasts are decoded, has not received enough
attention.

Traditional autoregressive (AR) decoders generate predictions one step
at a time, feeding each step's output back as input for the next. While
conceptually simple and intuitive, this step-by-step process is prone to
error accumulation, where minor inaccuracies at early steps propagate and
amplify over long horizons. To avoid this recursive error growth, many
recent forecasting models adopt direct multi-step decoding: models such as
Informer, FEDformer, DLinear, iTransformer, TiDE, and PatchTST predict
the forecast horizon, or a large output block, in a single forward
pass~\citep{zhou2021informer,zhou2022fedformer,zeng2023transformers,
liu2024itransformer,das2023tide,nie2023patchtst}. This direct strategy
improves inference efficiency and avoids repeated feedback of its own
predictions, but loses the ability to condition each prediction on
previously generated values. Modern foundation models such as
Chronos-2~\citep{ansari2025chronos2} and TimesFM~\citep{das2024timesfm}
largely follow this direct-decoding paradigm, rolling out only when the
requested horizon exceeds their native maximum output length.

In this paper, we ask whether the decoding schedule should itself
be chosen, rather than fixed at inference time. We posit that the best rollout schedule varies with the input context,
and may further vary with the backbone, so no single fixed schedule
should be expected to be optimal everywhere. What this paper demonstrates is that this
variability can be exploited: dynamically choosing the schedule for
each forecasting instance improves over the backbone's default one-shot
decoding. Because the choice happens at inference time, it can be made for patch-based frozen backbones without modifying their parameters. We evaluate it with two such backbones, Chronos-2 and Moirai-1.1-R-small.
We introduce a framework with two components:
(i) a dynamic-programming search that reduces the exponential space of
decoding schedules to a quadratic number of candidates, and (ii) a
lightweight selector network that ranks these candidates for each input
window using the backbone's hidden states. The selector is trained on
historical windows of the evaluated tasks for a fixed backbone, so the
intended setting is adaptation to tasks whose history is available, not
transfer to unseen tasks (Appendix~\ref{app:full-results}). On fev-bench
with a frozen Chronos-2 backbone, our method improves point-forecast
skill over one-shot decoding by $+0.20\%$ on the full benchmark, with a
bootstrap confidence interval that excludes zero. The gain grows to
$+0.68\%$ on the 19 tasks with the longest horizons. Ground-truth
selection within the same dynamic program attains $+3.25\%$ and
$+9.13\%$ on these two sets, so the selector recovers about 6--8\% of
this headroom. With a frozen Moirai-1.1-R-small backbone, the
improvement is also significant on the full benchmark ($+0.23\%$), and
for both backbones we claim gains in point accuracy only.

\section{Related Work}

\paragraph{Forecasting models.}
A long line of work designs new architectures for time series
forecasting. Transformer variants reduce attention cost or inject
periodicity priors~\citep{zhou2021informer, wu2021autoformer,
zhou2022fedformer}; PatchTST~\citep{nie2023patchtst} demonstrates
that channel-independent patching alone is competitive on long
horizons; DLinear~\citep{zeng2023transformers} matches several
Transformer baselines with a one-layer linear model;
N-BEATS~\citep{oreshkin2020nbeats}, N-HiTS~\citep{challu2023nhits},
and iTransformer~\citep{liu2024itransformer} broaden the design
space further. A more recent line pretrains a single backbone on
heterogeneous corpora for zero-shot transfer:
Chronos~\citep{ansari2024chronos},
TimesFM~\citep{das2024timesfm}, and
Moirai~\citep{woo2024moirai}. These works expand the pool of
frozen backbones available at inference but leave open how each
window should be decoded.

\paragraph{Input-side adaptation.}
Several works modify how the series is presented to the backbone.
PatchTST treats patch size and stride as first-class
hyperparameters~\citep{nie2023patchtst}; Moirai introduces
multiple patch-projection heads for different sampling
frequencies~\citep{woo2024moirai}; Chronos quantizes scaled values
into a fixed bin vocabulary~\citep{ansari2024chronos}; and
WaveToken~\citep{masserano2025wavetoken} replaces bin-based
tokenization with a wavelet-domain vocabulary. Pathformer goes
further with a multi-scale router that weights several patch-size
branches inside a jointly trained
Transformer~\citep{chen2024pathformer}. SRSNet instead questions the
fixed adjacent-patching assumption itself: its SRS module adaptively
selects informative patches and dynamically reassembles them to form a
selective representation space for patch-based forecasters~\citep{wu2025srsnet}.
All of these focus on the \emph{input} side; the decoding procedure
remains a fixed backbone-level choice.

\paragraph{Multi-step decoding and input-conditional routing.}
The trade-off between direct and iterated multi-step prediction
has long been studied in classical
forecasting~\citep{marcellino2006comparison,
chevillon2007direct, taieb2016bias}. Among
pretrained foundation models, the most direct empirical signal we are aware of
comes from TimesFM~\citep{das2024timesfm}: increasing the output
patch length, and thereby reducing the number of autoregressive
steps, monotonically improves long-horizon accuracy on average.
This is, however, a globally tuned hyperparameter, not a
per-instance decision; none of the fixed schedules we evaluate
improves on one-shot decoding for Chronos-2
(Appendix~\ref{app:full-results}).

Multi-step strategies have also been compared and selected outside
pretrained models. DyStrat~\citep{green2024dystrat} trains its own
forecasters for each recursive, direct, or hybrid strategy and selects
among these separately trained models with a classifier on the input
window. Stratify~\citep{green2025stratify} unifies these strategies in
one parameterized space and finds that no strategy dominates across
tasks, in line with our fixed-schedule results; its strategies are
compared per task rather than selected per instance. In both works, each
strategy is realized by forecasters trained from scratch.
Beyond forecasting strategies, a related mechanism is
input-dependent expert routing: sparse mixture-of-experts
gates each input to a subset of experts in
NLP~\citep{shazeer2017outrageously, fedus2022switch} and in time
series via Time-MoE~\citep{shi2025timemoe} and
Moirai-MoE~\citep{liu2025moiraimoe}.

We borrow the input-conditional gating view, but route a different
object: instead of choosing which expert processes the input, our method
chooses how a fixed forecasting model should decode the forecast horizon.
Unlike DyStrat and Stratify, it does not train a forecaster per
strategy: the candidates are rollout schedules of one frozen pretrained
model, including chunks of unequal size, and the selector reads the
backbone's hidden states for each candidate. A choice per task is not
enough in our setting: a fixed schedule chosen per task on historical
windows does not improve on one-shot decoding
(Appendix~\ref{app:full-results}). 

Our main contribution is a dynamic programming framework that makes this
rollout-selection problem tractable: although the space of horizon
partitions is exponential, candidate evaluation reduces to quadratic
complexity. To our knowledge,
no prior work selects among patch-level rollout schedules of a single
frozen pretrained forecaster for each input window at inference time.
Finally, our deployment mechanism draws on two standard tools:
seed-replica ensembles, whose disagreement is a classical estimate of
predictive uncertainty~\citep{lakshminarayanan2017simple}, and
selective prediction, where a model abstains when its confidence is
low~\citep{elyaniv2010foundations, geifman2017selective}. Our agreement gate instantiates both,
with the particularity that abstention has a natural fallback: the
backbone's own one-shot forecast, which makes deviation strictly
opt-in.

\section{Preliminary}

\subsection{Notations}
\label{sec:notation}

The time series forecasting task takes as input the past $L$ timesteps of
a $C$-channel series and predicts the following $h$ timesteps, the
\emph{horizon}. The forecasting models we consider emit predictions in
\emph{patches} of $P$ timesteps ($P{=}16$ for Chronos-2, and for
Moirai-1.1-R-small as configured in Section~\ref{sec:experiments}); we
therefore measure the horizon in patches throughout, and write
$H = \lceil h/P \rceil$ for the number of patches it spans. When $P$ does
not divide $h$, the last patch is truncated to its first $h-(H-1)P$
timesteps: the model call that reaches the end of the horizon requests
only the remaining timesteps, and errors are computed on the first $h$
timesteps. To keep the notation light, we write sizes as if every patch
had $P$ timesteps, so that $HP$ stands for $h$ in that case.

We denote the past context by $\mathbf{X} \in \mathbb{R}^{L \times C}$ and the
ground-truth horizon by $\mathbf{Y} \in \mathbb{R}^{HP \times C}$. A
forecasting model $g_{\phi}$ takes a context and a requested number of
patches, and produces a prediction
$g_{\phi}(\mathbf{X}, H) \in \mathbb{R}^{HP \times C}$; the context may be
longer than $L$ timesteps, since the rollouts below append generated
patches to $\mathbf{X}$. For $K \le H$ we write
$\mathbf{Y}_{1:K} \in \mathbb{R}^{KP \times C}$ for the first $K$ patches of
$\mathbf{Y}$, its \emph{prefix} of length $K$, and likewise for predictions.
We measure forecast quality with an error functional
$\ell(\hat{\mathbf{Y}}, \mathbf{Y}) \in \mathbb{R}_{\geq 0}$, defined for any
pair of equal-length prediction and target.

Our method builds the forecast with a dynamic program over the prefix
lengths $K = 1, \ldots, H$ (Section~\ref{sec:dp}). At each $K$ it compares
$K$ candidate prefixes, indexed by $i \in \{0, \ldots, K-1\}$: candidate $i$
reuses the length-$i$ prefix kept at an earlier step and generates the
remaining $K-i$ patches in one model call, so that candidate $0$ generates
all $K$ patches at once.

We assume $g_\phi$ is a foundation Transformer model whose encoder
produces one hidden vector of size $d_m$ per output patch and channel,
$\text{TransformerEncoder}(\mathbf{X}, H) \in \mathbb{R}^{H \times C
\times d_m}$, from which a prediction head emits the forecast:
\[
  g_\phi(\mathbf{X}, H)  = \text{MLP}_{\text{head}}(\text{TransformerEncoder}(\mathbf{X}, H)).
\]
Averaging the hidden states over channels yields a per-patch summary
\[
  \mathbf{E}(\mathbf{X}, H) = \text{AvgPool}_{\text{channel}} \left( \text{TransformerEncoder}(\mathbf{X}, H) \right) \in \mathbb{R}^{H \times d_{m}},
\]
which we later use as the feature representation of generated patches:
Section~\ref{sec:dp} stacks these summaries into a feature sequence
$F_K^{(i)} \in \mathbb{R}^{K \times d_m}$ for each candidate. We write
$[\mathbf{U}, \mathbf{V}]$ for concatenation along the time axis, so that
$\mathbf{U} \in \mathbb{R}^{a \times C}$ and $\mathbf{V} \in \mathbb{R}^{b \times C}$ imply
$[\mathbf{U}, \mathbf{V}] \in \mathbb{R}^{(a+b) \times C}$; for feature
sequences, whose rows are patches, it concatenates along the patch axis in
the same way. Table~\ref{tab:notation} summarizes the notation, including
the symbols defined in Sections~\ref{sec:rollouts} to~\ref{sec:scorer}.

\begin{table}[H]
\caption{Notation.}
\label{tab:notation}
\begin{center}
\small
\begin{tabular}{lll}
\toprule
Symbol & Meaning & Section \\
\midrule
$L$, $h$ & context and horizon length, in timesteps & \ref{sec:notation} \\
$C$ & number of channels & \ref{sec:notation} \\
$P$ & patch size, in timesteps ($16$) & \ref{sec:notation} \\
$H$ & number of horizon patches, $\lceil h/P \rceil$ & \ref{sec:notation} \\
$\mathbf{X}$, $\mathbf{Y}$ & context ($L \times C$) and ground-truth horizon ($HP \times C$) & \ref{sec:notation} \\
$g_\phi$ & frozen forecasting model (backbone) & \ref{sec:notation} \\
$\ell$ & error functional between equal-length prediction and target & \ref{sec:notation} \\
$K$ & prefix length, in patches ($1 \le K \le H$) & \ref{sec:notation} \\
$i$ & candidate index at prefix length $K$ ($0 \le i < K$): length of the reused prefix & \ref{sec:notation} \\
$d_m$ & hidden size of the backbone & \ref{sec:notation} \\
$\mathbf{E}$ & channel-averaged hidden states, one $d_m$-vector per output patch & \ref{sec:notation} \\
$\mathbf{r} = [r_1, \ldots, r_k]$ & rollout: patches produced by each of its $k$ model calls & \ref{sec:rollouts} \\
$\Pred_K$, $\Feat_K$ & prediction and feature prefix retained at length $K$ & \ref{sec:dp} \\
$P_K^{(i)}$, $F_K^{(i)}$ & prediction ($KP \times C$) and features ($K \times d_m$) of candidate $i$ & \ref{sec:dp} \\
$S_\psi$ & learned scorer over the $K$ candidates of a DP row & \ref{sec:scorer} \\
\bottomrule
\end{tabular}
\end{center}
\end{table}

\subsection{Rollouts}
\label{sec:rollouts}

A \emph{rollout} describes how the $H$ horizon patches are produced. Two examples are:
\begin{itemize}
\item \textbf{One Shot}: all $H$ patches are predicted in one forward pass, $g_\phi(\mathbf{X},\, H)$.
\item \textbf{Fully Autoregressive}: one patch is predicted at a time; the model is called $H$ times. 
\[ y_1 = g_{\phi}(\mathbf{X},\, 1), \quad y_2 = g_{\phi}([\mathbf{X},\, y_1],\, 1), \quad \ldots, \quad y_H = g_{\phi}([\mathbf{X},\, y_1,\, y_2, \ldots, y_{H-1}],\, 1) \]
\end{itemize}

Each rollout $\mathbf{r}$ maps to a unique integer decomposition (i.e. an ordered partition) of $H$: $\mathbf{r} = [r_1, r_2, \ldots,r_k]$ such that $\sum_{i=1}^{k} r_i = H$. Here, $r_i \in \mathbb{N}$ denotes the number of patches predicted in the $i$-th call. One-shot rollout corresponds to $[H]$, while full-AR rollout corresponds to $[1, 1, \ldots, 1]$.

The total number of possible rollout strategies is $2^{H-1}$. For a given rollout strategy $\mathbf{r}$, the resulting prediction $g_{\phi}(\mathbf{X})^{(\mathbf{r})} \in \mathbb{R}^{HP \times C}$ is constructed by concatenating the outputs of $k$ sequential model calls:
\[
    \mathbf{y}^{(i)} = g_{\phi}\left( [\mathbf{X}, \mathbf{y}^{(1)}, \dots, \mathbf{y}^{(i-1)}], r_i \right) \quad \text{for } i = 1, \dots, k
\]
where each $\mathbf{y}^{(i)} \in \mathbb{R}^{r_i P \times C}$ is the prediction for the $i$-th segment of the rollout.

\section{Method}

We formulate decoding rollout strategies as the problem of choosing the best rollout 
$\mathbf{r}$ for the current input and forecasting model. Two obstacles
make this challenging. First, the rollout space contains $2^{H-1}$
elements, so finding the optimal rollout by brute force requires
$\mathcal{O}(2^{H})$ search. Second, the ground-truth horizon
$\mathbf{Y}$ needed to score candidate rollouts is unavailable at
inference time. We address these obstacles in two steps.

Section~\ref{sec:dp} replaces exhaustive search over all rollout histories
with a DP-tractable surrogate that evaluates only $\mathcal{O}(H^2)$
candidates. Section~\ref{sec:scorer} then makes this surrogate executable at
inference time by replacing the ground-truth-based scores with a
learned scorer.
 
\subsection{From the original objective to a DP-tractable surrogate}
\label{sec:dp}

\paragraph{The original problem.}
The rollout selector should ideally solve the following exact problem:
\begin{equation}
\operatorname{minimize}_{\mathbf{r}} \;
\ell\bigl( g_{\phi}(\mathbf{X})^{(\mathbf{r})},\, \mathbf{Y} \bigr)
\quad \text{subject to} \quad \mathbf{r} \in \mathcal{P}(H).
\label{eq:original-objective}
\end{equation}
where $\mathcal{P}(H)$ denotes the set of ordered integer partitions of $H$.

Solving this exact problem directly is intractable for two
reasons: the search space has $2^{H-1}$ elements, and at inference time
$\mathbf{Y}$ is unobserved. We first address the search problem by
introducing a DP-tractable surrogate, and later replace its
scoring rule with a learned scorer in Section~\ref{sec:scorer}.

\paragraph{Surrogate problem.}

Consider two rollouts whose final $t$ calls have the same lengths:
\[
  \mathbf{r}^{(1)} = [r_1^{(1)}, \dots, r_a^{(1)}], \quad
  \mathbf{r}^{(2)} = [r_1^{(2)}, \dots, r_b^{(2)}],
  \qquad
  r_{a-i}^{(1)} = r_{b-i}^{(2)} \quad (i = 0, 1, \dots, t-1),
\]
and let $T = \sum_{i=0}^{t-1} r_{a-i}^{(1)}$ be the number of patches
these shared calls produce. Since both rollouts decode the final $T$
patches with the same schedule, we take the quality of the preceding
$(H-T)$ patches as the criterion for choosing between them, a working
approximation that we quantify empirically below.
We define this initial sequence of $(H-T)$ patches as the \textit{prefix} of a rollout. 
Consequently, we replace the full-horizon objective with a prefix-level
surrogate: at each prefix length \(K\), we retain the candidate with the
lowest error against the ground-truth prefix \(\mathbf{Y}_{1:K}\). This
reduces the search to recursively maintaining one retained prefix for
each length \(K\), denoted by \(\text{Pred}_K\). 

We call this scheme the \emph{prefix-retaining surrogate}: instead of exploring every rollout
history, it retains exactly one prefix per length, and only rollouts that
pass through a retained prefix are ever considered. Equivalently, it is a
beam search over rollout histories with beam width one per prefix length.

We initialize $\text{Pred}_0=\emptyset$. For each prefix length
$K=1,\ldots,H$, the surrogate forms $K$ candidates, one for each
predecessor length $i<K$:
\[
  P_K^{(i)}
  =
  \left[
    \text{Pred}_i,\;
    g_{\phi}\!\left([\mathbf{X},\text{Pred}_i],\,K-i\right)
  \right],
  \qquad 0 \le i < K .
\]
Here, $P_K^{(i)}$ is the length-$K$ prediction obtained by extending
the retained prefix $\text{Pred}_i$ with one additional backbone call of
length $K-i$. For each prefix length $K$, the ground-truth-based rule retains the
candidate whose prediction prefix is closest to the ground-truth prefix:
\[
i_K^{\star} = \argmin_{0 \le i < K} \ell\bigl( P_K^{(i)},\, \mathbf{Y}_{1:K} \bigr)
\]
The selected candidate is then retained as the prefix for length $K$:
\[
  \text{Pred}_K \gets P_K^{(i_K^{\star})}.
\]
Thus, the surrogate replaces the exponential space of rollouts with
$\sum_{K=1}^{H} K = \mathcal{O}(H^2)$ candidate evaluations.
Figure~\ref{fig:dp-tree} illustrates the retained and discarded rollout
histories at $H{=}4$.

\begin{figure}[H]
\begin{center}
\includegraphics[width=0.8\linewidth]{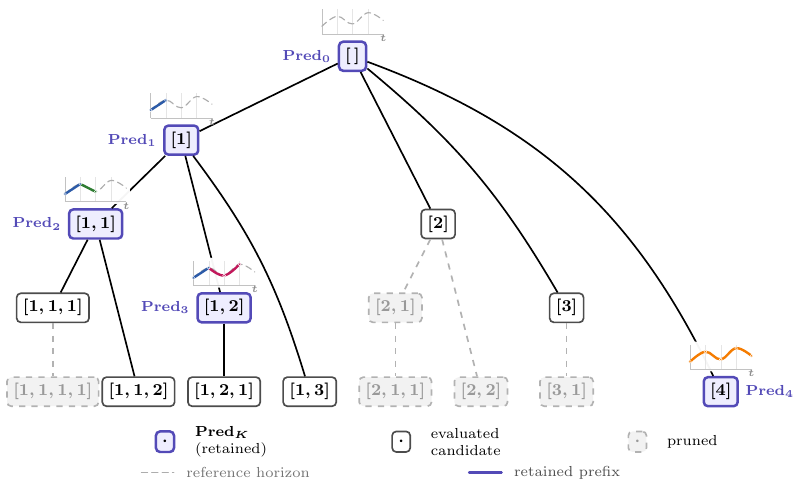}
\end{center}
\caption{
Prefix-retaining surrogate search at $H{=}4$. Among the
$2^{H-1}{=}8$ possible rollout histories, the surrogate retains a
single prediction prefix at each horizon-prefix length. Highlighted
nodes denote the retained predictions $\text{Pred}_K$, while gray dashed
nodes indicate rollout histories discarded because they would require a
different retained prefix at an earlier length. The node text shows the
corresponding rollout decomposition for visualization only, and the
small curves above highlighted nodes show the actual retained
time-series prefix.
}

\label{fig:dp-tree}
\end{figure}

\paragraph{Experimental evidence.}
The prefix-retaining surrogate is strictly weaker than the original
problem: each step retains a single candidate on the basis of
length-$K$ prefix error alone, which can discard rollout histories that
are suboptimal at an intermediate prefix but optimal at the final
horizon. Therefore, the set of rollout histories retained by the
surrogate is a strict subset of the full $2^{H-1}$ rollout space.

However, this compact candidate set empirically suffices to retain most of the performance gain. The surrogate
evaluates $H(H{+}1)/2$ candidates rather than the $2^{H-1}$ possible
rollouts ($66$ of $1024$ at $H{=}11$). Yet with Chronos-2, on the official evaluation
windows of the 58 tasks with $2 \le H \le 11$ patches, it captures on
average $96.7\%$ of the MASE reduction that exhaustive search achieves
over one-shot decoding ($93.2\%$ on the 28 tasks with $H \ge 4$).
Retaining one prefix per horizon length thus preserves most of the
achievable gain while making selection quadratic.

\paragraph{DP features.}
The retained prediction prefix $\text{Pred}_K$ is sufficient for generating future candidate forecasts, since every later candidate is formed by extending some retained prefix $\text{Pred}_i$. For scoring candidates, however, the predicted values are not the only signal the frozen backbone offers: its hidden states summarize how each prefix was produced, and carry information about the forecast that the values alone may not expose. To give the learned scorer access to this richer signal, we additionally cache a feature prefix $\Feat_K$ alongside each retained prediction prefix.

We initialize $\Feat_0=\emptyset$. For each candidate
$P_K^{(i)}$ from the surrogate search, we construct the corresponding
feature candidate
\[
  F_K^{(i)}
  =
  \left[
    \Feat_i,\;
    \mathbf{E}\!\left([\mathbf{X},\text{Pred}_i],\,K-i\right)
  \right],
  \qquad 0 \le i < K .
\]
Here, $\mathbf{E}([\mathbf{X},\text{Pred}_i],K-i)$ denotes the hidden
representations associated with the newly generated suffix of length
$K-i$. The feature prefix $\Feat_K$ is aligned with the same
length-$K$ horizon prefix as $\text{Pred}_K$: it caches the hidden
representations of the generated output patches retained by the
surrogate. The same lowest-error predecessor selected for the prediction prefix is
used to retain the feature prefix:
\[
  \Feat_K \gets F_K^{(i_K^{\star})}.
\]
Thus, the state retained at prefix length $K$ is the pair
\[
  (\text{Pred}_K,\Feat_K)
  =
  \left(
    P_K^{(i_K^{\star})},
    F_K^{(i_K^{\star})}
  \right).
\]
The prediction component $\text{Pred}_K$ is used to generate future
candidates, while the feature component $\Feat_K$ is used only by the
learned scorer.

\paragraph{DP algorithm.}
With the candidate prediction and feature prefixes defined above,
Algorithm~\ref{alg:prefix-retaining-dp} implements the prefix-retaining
surrogate when ground-truth prefixes are available. At each prefix length $K$, it forms
$K$ candidates by extending each previously retained prefix
$(\text{Pred}_i,\Feat_i)$ for $i<K$. It then scores each candidate by its
prefix error against the ground-truth prefix $\mathbf{Y}_{1:K}$ and
retains the lowest-error candidate as $(\text{Pred}_K,\Feat_K)$;
Figure~\ref{fig:dp-table-h4-selector-arrow} visualizes this row-by-row
selection for the $H{=}4$ example. The only step
that depends on the unavailable future is the ground-truth-based selection in
line~\ref{line:oracle-argmin}; in \S\ref{sec:scorer}, this row-wise
$\argmin$ is replaced by a learned scorer so that the same DP
procedure can be run at inference time.

\begin{algorithm}[H]
\caption{Prefix-retaining DP with ground-truth selection}
\label{alg:prefix-retaining-dp}
\begin{algorithmic}[1]
\Require context $\mathbf{X} \in \mathbb{R}^{L \times C}$, frozen backbone $g_\phi$, ground-truth horizon $\mathbf{Y} \in \mathbb{R}^{HP \times C}$
\Ensure retained horizon prediction $\text{Pred}_H \in \mathbb{R}^{HP \times C}$
\State $\text{Pred}_0 \gets \emptyset$,\quad $\Feat_0 \gets \emptyset$
\For{$K = 1, \dots, H$}
  \For{$i = 0, \dots, K-1$}
    \State $P_K^{(i)} \gets \left[\text{Pred}_i,\; g_\phi\!\left([\mathbf{X}, \text{Pred}_i], K-i\right)\right]$
    \State $F_K^{(i)} \gets \left[\Feat_i,\; \mathbf{E}\!\left([\mathbf{X}, \text{Pred}_i], K-i\right)\right]$
  \EndFor
  \State $i_K^{\star} \gets
    \argmin_{0 \le i < K}\;
    \ell\bigl( P_K^{(i)},\, \mathbf{Y}_{1:K} \bigr)$
    \label{line:oracle-argmin}
  \State $\text{Pred}_K \gets P_K^{(i_K^{\star})}$
  \State $\Feat_K \gets F_K^{(i_K^{\star})}$
\EndFor
\State \Return $\text{Pred}_H$
\end{algorithmic}
\end{algorithm}

\paragraph{Correctness of the DP implementation.}

\begin{proposition}
\label{prop:dp-correct}
For every $K = 0, \ldots, H$, Algorithm~\ref{alg:prefix-retaining-dp}
computes exactly the retained pair $(\Pred_K, \Feat_K)$ of the
prefix-retaining surrogate.
\end{proposition}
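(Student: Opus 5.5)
The proof goes by strong induction on $K$, comparing the algorithm's state after the $K$-th iteration of the outer loop with the surrogate's recursive definition. The surrogate, as defined in Section~\ref{sec:dp}, is a recursion. It has base case $(\Pred_0,\Feat_0)=(\emptyset,\emptyset)$, and for $K\ge 1$ it sets $(\Pred_K,\Feat_K)=(P_K^{(i_K^\star)},F_K^{(i_K^\star)})$, where $P_K^{(i)}$ and $F_K^{(i)}$ are built from the earlier retained pairs $(\Pred_i,\Feat_i)$ with $i<K$, and $i_K^\star$ is the ground-truth argmin. The key observation is that row $K$ depends only on rows $0,\dots,K-1$. This makes the recursion well founded, and the algorithm simply evaluates it in increasing order of $K$.

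\textbf{Base case.} Line~1 of Algorithm~\ref{alg:prefix-retaining-dp} sets $\Pred_0\gets\emptyset$ and $\Feat_0\gets\emptyset$, which matches the surrogate's initialization. These variables are never reassigned afterwards, because the loop only writes to indices $K\ge1$.

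\textbf{Inductive step.} Fix $K\ge1$ and assume that, on entering iteration $K$, the stored $(\Pred_i,\Feat_i)$ equal the surrogate's retained pairs for every $i<K$. I would make three points here.
\begin{enumerate}
\item Each stored pair is written exactly once, in iteration $i$, and never overwritten. So every value read in iteration $K$ is the final value for its index.
\item Given the hypothesis, the inner loop assigns exactly the surrogate's $P_K^{(i)}$ and $F_K^{(i)}$ for all $0\le i<K$. This uses the fact that $g_\phi$ and $\mathbf{E}$ are deterministic functions of their inputs $([\mathbf{X},\Pred_i],K-i)$.
\item Line~\ref{line:oracle-argmin} computes the same argmin over the same finite, nonempty index set $\{0,\dots,K-1\}$, with the same $\ell$ and the same target $\mathbf{Y}_{1:K}$. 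It therefore returns the same $i_K^\star$, and the two assignments that follow reproduce the retained pair.
\end{enumerate}
One point should be stated explicitly: the feature prefix uses the same index $i_K^\star$ that was chosen for the predictions, exactly as the surrogate prescribes.

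\textbf{Main obstacle.} The only real subtlety is well-definedness of the argmin when there are ties. If several candidates attain the minimal error, the surrogate's $\Pred_K$ is only determined up to a tie-breaking rule. I would handle this by fixing a deterministic rule, for example the smallest minimizing $i$, and stipulating that the surrogate and line~\ref{line:oracle-argmin} use the same rule. With that convention the inductive step above closes.

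I would also remark on the truncated last patch when $P\nmid h$. There the call reaching the horizon requests only the remaining timesteps. Since both the algorithm and the surrogate use the same convention, this does not affect the argument.
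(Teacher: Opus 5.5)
Your proposal is correct and matches the paper's proof essentially step for step. Both use strong induction on $K$: the base case is the initialization $(\Pred_0,\Feat_0)=(\emptyset,\emptyset)$, and the inductive step notes that row $K$ builds exactly the surrogate's candidates $P_K^{(i)},F_K^{(i)}$ from the retained pairs with $i<K$ and then applies the same ground-truth argmin. Your remarks on tie-breaking, write-once storage, and determinism of $g_\phi$ and $\mathbf{E}$ only make explicit conventions the paper leaves implicit.
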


\begin{proof}
We prove by induction on the prefix length \(K\). For \(K=0\), the claim
holds by initialization, since the algorithm sets
\((\Pred_0,\Feat_0)=(\emptyset,\emptyset)\).

Assume that for all prefix lengths \(i<K\), the retained pair
\((\Pred_i,\Feat_i)\) equals the retained prediction and feature prefix
defined by the surrogate. At prefix length \(K\), the algorithm forms,
for every predecessor length \(0 \le i < K\), the candidate
\[
  P_K^{(i)}
  =
  \left[
    \Pred_i,\;
    g_\phi\!\left([\mathbf{X},\Pred_i],K-i\right)
  \right]
\]
and the corresponding feature candidate
\[
  F_K^{(i)}
  =
  \left[
    \Feat_i,\;
    \mathbf{E}\!\left([\mathbf{X},\Pred_i],K-i\right)
  \right].
\]
By the induction hypothesis, these are exactly the candidates obtained
by extending the retained prefixes available to the surrogate. The
algorithm then selects
\[
  i_K^\star
  =
  \argmin_{0 \le i < K}
  \ell\bigl( P_K^{(i)},\, \mathbf{Y}_{1:K} \bigr),
\]
which is the same ground-truth prefix-error rule used in the surrogate
definition. It then updates
\[
  \Pred_K \gets P_K^{(i_K^\star)},
  \qquad
  \Feat_K \gets F_K^{(i_K^\star)}.
\]
Thus the retained pair \((\Pred_K,\Feat_K)\) after row \(K\) is exactly
the retained pair specified by the prefix-retaining surrogate. By
induction, this holds for every \(K=0,\ldots,H\), and the returned
\(\Pred_H\) is the surrogate's retained horizon prediction.
\end{proof}

\begin{figure}[htbp]
\begin{center}
\includegraphics[width=0.92\linewidth]{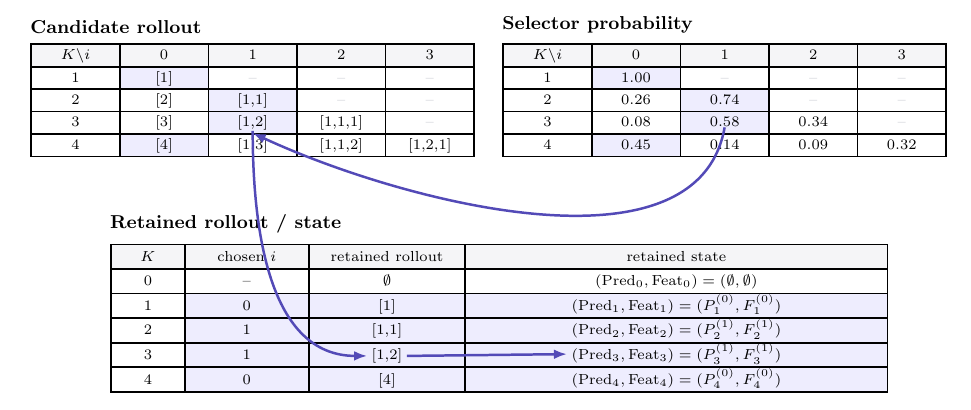}
\end{center}
\caption{Prefix-retaining DP table for the $H{=}4$ example. Each row enumerates candidate predecessors, and the arrow marks the retained choice that commits the next prefix.}
\label{fig:dp-table-h4-selector-arrow}
\end{figure}

\paragraph{Complexity.}
The prefix-retaining surrogate evaluates $\mathcal{O}(H^2)$ candidates
in total, rather than the $\mathcal{O}(2^H)$ rollout histories required
by full enumeration.
The $K$ candidates within a row can be batched in parallel, but rows
must be processed sequentially because row $K+1$ depends on the
computed value from previous rows. Thus, the total compute is
quadratic in $H$, while the sequential depth is $H$. At deployment the
same DP runs with the learned scorer, so the adaptive policy issues
$H(H{+}1)/2$ backbone calls where one-shot decoding issues one ($3$ at
$H{=}2$, $66$ at $H{=}11$, $171$ at $H{=}18$). This cost arises only on
tasks that admit a decision ($H\ge2$), and the scorer's own overhead is
negligible (Section~\ref{sec:ensemble}).

\subsection{Learned scorer for inference}
\label{sec:scorer}

\paragraph{Inference-time selection.}
Algorithm~\ref{alg:prefix-retaining-dp} uses the ground-truth horizon
$\mathbf{Y}$ in line~\ref{line:oracle-argmin} to select the retained
candidate at each prefix length $K$. However, this ground-truth based selection is not
available at inference time. We therefore replace the prefix-error criterion with a learned scorer
\[
  S_\psi :
  \underbrace{
  \mathbb{R}^{K \times d_m}
  \times \cdots \times
  \mathbb{R}^{K \times d_m}
  }_{K\ \text{candidate feature prefixes}}
  \to [0,1]^K,
\]
which takes the \(K\) candidate feature prefixes constructed by the same
DP procedure and returns one score for each candidate. 

At prefix length $K$, the learned scorer jointly scores all candidate
feature prefixes and selects the highest-scoring predecessor:
\[
  i_K^\star
  =
  \argmax_{0 \le i < K}
  \left[
    S_\psi\!\left(
      F_K^{(0)}, F_K^{(1)}, \ldots, F_K^{(K-1)}
    \right)
  \right]_i .
\]
The retained prediction and feature prefixes are then updated as
\[
  (\text{Pred}_K,\Feat_K)
  \gets
  \left(
    P_K^{(i_K^\star)},
    F_K^{(i_K^\star)}
  \right).
\]

\paragraph{Training objective.}
During training, windows are drawn from the historical,
non-evaluation part of each series, so the ground-truth horizon
\(\mathbf{Y}\) is available. For each DP row, we construct the same
candidate prefixes as in inference and compute each candidate's prefix
error against the corresponding ground-truth. The scorer is trained to assign
higher probability to candidates with lower prefix error. 

Specifically, within each DP row we turn the candidates' prefix errors
into a soft target distribution: a softmax over the negated errors,
normalized by the task's error scale and sharpened by a label
temperature. We train the scorer by minimizing the cross-entropy
between its softmax output and this target, averaged over all DP rows
and training windows. In implementation, padded candidate slots are
masked out.

\paragraph{Forecasting model specific scorer.}
In this work, we train a separate scorer for each frozen forecasting
model, Chronos-2 and Moirai-1.1-R-small; conditioning a shared scorer on
a model identifier is an alternative that we do not evaluate. This is
useful because the best retained predecessor can depend on the
forecasting model, even for the same input window.

\paragraph{Architecture.}
At prefix length $K$, $S_\psi$ ingests the $K$ candidate feature
sequences
\[
  F_K^{(0)}, F_K^{(1)}, \ldots, F_K^{(K-1)}
\]
and emits one scalar score per candidate.

\textit{Stage 1 (per-candidate encoding).}
Each candidate feature sequence $F_K^{(i)}$ is projected to width
$d$, augmented with patch-position and role embeddings, prepended
with a learned \texttt{[CLS]} token, and processed by two transformer
encoder blocks. The \texttt{[CLS]} output is used as the candidate
vector.

\textit{Stage 2 (cross-candidate scoring).}
The $K$ candidate vectors are stacked, augmented with candidate-index
and prefix-length embeddings, and processed by two additional
transformer encoder blocks. This allows each candidate to be scored
relative to the other candidates in the same DP row. A linear head then
maps each contextualized candidate vector to a scalar score.

A single set of scorer parameters $\psi$ is shared across all prefix
lengths $K$. The variable number of candidates is handled by the
variable input length of the cross-candidate transformer and by the
learned prefix-length embedding.

\subsection{Agreement-gated ensemble deployment}
\label{sec:ensemble}

A single scorer decides \emph{which} candidate ranks first, but we observe empirically that its score margin is an unreliable guide to \emph{whether} deviating from the one-shot forecast will help.
Agreement across independently trained scorers, which is the classical
uncertainty signal of deep
ensembles~\citep{lakshminarayanan2017simple}, carries this
information: in our validation diagnostics it separates helpful from
harmful deviations.
We therefore train five scorers that differ only in the random seed and let them vote: the
policy adopts a candidate when at least four of the five scorers
propose it and it is not the one-shot continuation, and otherwise keeps
the backbone's one-shot forecast, a form of selective prediction with a
cost-free fallback~\citep{elyaniv2010foundations, geifman2017selective}.
The threshold was fixed before any evaluation. 
Because the one-shot rollout is always among the candidates,
ground-truth selection is never worse than one-shot decoding. A
learned scorer forfeits this guarantee. The agreement gate does not
restore it; it makes deviations more conservative, and in our experiments
empirically more reliable (Table~\ref{tab:ablation}). The scorers' own cost is negligible against
the shared backbone calls. Appendix~\ref{app:gate} expands on the
margin diagnostics, the guarantee, the pre-fixed threshold, and the
cost accounting.

\section{Experiments}
\label{sec:experiments}

\paragraph{Benchmark and protocol.}
We evaluate on fev-bench~\citep{shchur2025fev}, a benchmark of 100
forecasting tasks across seven domains. We follow the official evaluation
protocol without modification (official evaluation windows, MASE, and
cross-learning batching); our pipeline reproduces the officially published
per-task MASE of one-shot Chronos-2 up to a maximal relative difference of
$1.5\times10^{-7}$. Throughout, the error functional $\ell$ is
instantiated as MASE, both for the prefix errors that supervise the
scorer and for evaluation. The MASE scale factor is computed from the
context window, so it is common to all candidates of a DP row and does
not affect the selection. Rollouts feed the backbone's point forecasts
back as context; how to propagate a predictive distribution through a
multi-step rollout is a design question of its own in this model
family, and our claims are confined to point accuracy.

\paragraph{Backbone and scorer training.}
The main backbone is a frozen Chronos-2~\citep{ansari2025chronos2}. As a
second backbone, we use a frozen Moirai-1.1-R-small~\citep{woo2024moirai}.
To align with our formulation, it uses a fixed patch size of 16, and its
point forecast is the median of 5,000 samples. Scorers are trained
separately for each backbone, only on historical windows that strictly
precede the official evaluation windows of each task. We train five scorers
that differ only in the random seed, and select each checkpoint on a
validation split of the historical windows. Training details are given in
Appendix~\ref{app:training}.

\paragraph{Deployment.}
At every decoding decision, each of the five scorers proposes one candidate
(Section~\ref{sec:ensemble}). The deployed policy deviates from one-shot
decoding only when at least four scorers agree on the same candidate and
that candidate is not the one-shot continuation; otherwise the one-shot
forecast is returned unchanged. The agreement threshold of four was fixed
before evaluation.

\paragraph{Metrics and statistics.}
We compare against one-shot decoding at the task level, pairing the
deployed policy with the one-shot forecasts produced by the same
pipeline. A task is a \emph{tie} exactly when the deployed policy never
deviates on it, in which case its output is identical to one-shot: 34
of the 100 tasks have $H{=}1$, admit no decision, and are structural
ties, and further ties arise when the agreement gate never fires on a
task.

\emph{Win rate} counts a tie as half a win; we report it as a
descriptive statistic only, and confine statistical claims to the
skill confidence intervals below.
\emph{Skill} is defined as $(1-\text{geometric mean of per-task
MASE ratios})$ against one-shot, with ratios clipped to $[10^{-2},10^{2}]$
as in the benchmark, and is reported as a
percentage; we report 95\% confidence intervals from a task-level
bootstrap with 20{,}000 resamples. Results are reported on the full
benchmark and on nested subsets of tasks with $H\ge2$, $H\ge4$, and
$H\ge9$.

\subsection{Main results}
\label{sec:main-results}

Table~\ref{tab:main} reports the deployed selector, whose
hyperparameters were chosen on validation data alone by the validation
performance of the deployed policy itself (Appendix~\ref{app:training});
an alternative configuration and per-task results are reported in
Appendix~\ref{app:full-results}. For Chronos-2, the selector's skill improvement
over one-shot decoding is statistically supported on every nested
subset: the 95\% confidence interval excludes zero on the full
benchmark ($+0.20\%$) and the improvement grows with horizon depth,
reaching $+0.68\%$ on tasks with $H\ge9$, where the policy wins 12 of
the 19 tasks against 4 losses. Overall it deviates from one-shot
decoding on 49 of the 66 applicable tasks and wins 29 of those 49.
With Moirai-1.1-R-small, the improvement is likewise statistically
supported on every nested subset ($+0.23\%$ on the full benchmark and $+0.42\%$
on $H\ge4$), with a smaller gain on $H\ge9$ ($+0.28\%$).

\begin{table}[htbp]
\caption{Task-level comparison of the deployed selector against one-shot
decoding of the same frozen backbone on fev-bench. W/T/L are wins, ties, and losses; a tie
means the deployed policy returned the one-shot forecast unchanged on that
task. Win rate and skill are as defined in
Section~\ref{sec:experiments}. Skill estimates whose 95\% confidence interval excludes zero are in bold.}
\label{tab:main}
\begin{center}
\footnotesize
\setlength{\tabcolsep}{3pt}
\begin{tabular}{llcccc}
\toprule
 & & \textbf{full} & \textbf{$H\ge2$} & \textbf{$H\ge4$} & \textbf{$H\ge9$} \\
 & & (100 tasks) & (66 tasks) & (36 tasks) & (19 tasks) \\
\midrule
\multirow{2}{*}{Chronos-2} & W/T/L (win rate) & 29/51/20 (54.5\%) & 29/17/20 (56.8\%) & 19/7/10 (62.5\%) & 12/3/4 (71.1\%) \\
 & skill [95\% CI] & $\mathbf{+0.20}$ [$+0.06$, $+0.37$] & $\mathbf{+0.31}$ [$+0.10$, $+0.55$] & $\mathbf{+0.52}$ [$+0.21$, $+0.87$] & $\mathbf{+0.68}$ [$+0.17$, $+1.25$] \\
\midrule
\multirow{2}{*}{Moirai} & W/T/L (win rate) & 36/43/21 (57.5\%) & 36/9/21 (61.4\%) & 21/6/9 (66.7\%) & 11/3/5 (65.8\%) \\
 & skill [95\% CI] & $\mathbf{+0.23}$ [$+0.07$, $+0.40$] & $\mathbf{+0.35}$ [$+0.11$, $+0.60$] & $\mathbf{+0.42}$ [$+0.18$, $+0.74$] & $\mathbf{+0.28}$ [$+0.07$, $+0.51$] \\
\bottomrule
\end{tabular}
\end{center}
\end{table}

Since Chronos-2 tops the published fev-bench leaderboard at the time of writing and the deployed policy modifies only a small share of its forecasts, the selector inherits the backbone's dominance
over every published model essentially unchanged; the full pairwise
comparison is in Appendix~\ref{app:full-results}.

\section{Analysis}
\label{sec:analysis}

Chronos-2 is our primary backbone, and Moirai serves to check that the
main result carries over. Except for the depth analysis, which covers
both backbones, the analyses in this section use Chronos-2.

\paragraph{The signal strengthens with horizon depth.}
For Chronos-2, both metrics in Table~\ref{tab:main} increase monotonically as
evaluation is restricted to tasks with deeper horizons: win rate rises
from 54.5\% on the full benchmark to 71.1\% on tasks with $H\ge9$, and
skill rises from $+0.20\%$ to $+0.68\%$; the same ordering holds for
skill for the alternative configuration in
Appendix~\ref{app:full-results}. For Moirai, both metrics also rise from
the full benchmark to $H\ge4$ ($57.5\%$ to $66.7\%$, and $+0.23\%$ to
$+0.42\%$), but not further to $H\ge9$ ($65.8\%$, $+0.28\%$). This matches the design intuition
behind adaptive decoding: deeper horizons contain more decision points,
so a per-window choice of rollout schedule has more opportunities to
improve on the default one-shot strategy, while single-patch horizons admit no
decision at all. Fixed alternatives point the opposite way: none of the global fixed
schedules we evaluate improves on one-shot decoding on any subset, the
two finest are significantly worse, and their deficits grow with depth
(Appendix~\ref{app:full-results}), so these gains are not available to
any single schedule. Nor are they available to a fixed schedule chosen
per task or per horizon on the validation windows: the per-task choice
does not improve on one-shot decoding ($-0.25\%$ [$-0.60$, $+0.04$] on
$H\ge2$), and the per-horizon choice reduces to one-shot decoding
(Appendix~\ref{app:full-results}).

\paragraph{The statistical support comes from deeper horizons.}
The skill confidence intervals in Table~\ref{tab:main} exclude zero on
every subset for both backbones, including the full benchmark ($+0.20\%$,
[$+0.06$, $+0.37$] for Chronos-2 and $+0.23\%$, [$+0.07$, $+0.40$] for
Moirai). On disjoint depth bins (Appendix~\ref{app:full-results}), the
Chronos-2 policy is not significant on the 30 tasks with $H=2$--$3$
($+0.05\%$ [$-0.16$, $+0.39$]), but it is on the 17 tasks with
$H=4$--$8$ ($+0.34\%$ [$+0.04$, $+0.73$]) and on the 19 tasks with
$H\ge9$ ($+0.68\%$ [$+0.17$, $+1.25$]). The benchmark-wide significance
thus comes from the 36 tasks with $H\ge4$. With Moirai, the same bins
give $+0.25\%$ [$-0.16$, $+0.69$], $+0.59\%$ [$+0.14$, $+1.16$], and
$+0.28\%$ [$+0.07$, $+0.51$], so its significance also comes from
$H\ge4$. For Chronos-2, the deepest bin is significantly above the
shallowest ($+0.63$ [$+0.02$, $+1.25$]). For the alternative
configuration in Appendix~\ref{app:full-results}, the interval excludes
zero on the two deepest subsets, consistent with the same depth pattern.

\paragraph{Headroom, coverage, and the size of the effect.}
The full-benchmark skill of $+0.20\%$ is small for two reasons. First,
it is diluted: the 34 tasks with $H{=}1$ admit no decision and are
identical to one-shot decoding by construction, and on the 66 tasks with
$H\ge2$ the skill is $+0.31\%$. Second, the deployed policy is
conservative: it keeps the one-shot forecast unless four of the five
scorers agree, and on a median task it deviates on only 5 to 12\% of its
forecasts, where a forecast is one series in one official window
(Appendix~\ref{app:full-results}).
The per-task results in Appendix~\ref{app:full-results} provide an oracle reference for our selector.
The DP oracle, which selects with the ground
truth, attains a skill of $+3.25\%$ on the full benchmark and
$+9.13\%$ on $H\ge9$, so the deployed selector recovers roughly $6$--$8\%$ of that headroom. 

The oracle bound is structural rather than empirical: the one-shot rollout is always among
the final-row candidates, so ground-truth selection can never be worse
than one-shot decoding, which is why the oracle columns never lose in
Table~\ref{tab:pertask}. The oracle is also optimistic: it selects per
window with access to the realized future, so it exploits noise that
no causal predictor can anticipate, and its headroom is an upper bound
rather than a reachable target. The gap between the two is where
future gains lie. Since the horizon is known before inference, adaptive
decoding can also be applied selectively, to the long horizons where its
gain is largest.

\paragraph{Inference cost.}
We measured both decoding methods on one RTX 3090 on the 66 tasks with
$H\ge2$, using the first, middle, and last official window of each task,
identical inputs and batching, an fp32 backbone, and fp16 scorers; each
value is the median of three timed runs after two warm-up runs, excluding
model loading (Table~\ref{tab:cost}). Each DP candidate is currently a
separate backbone call, so the slowdown tracks the number of candidates,
$H(H{+}1)/2$: backbone calls take a median 97.6\% of the adaptive time
and scorer voting 1.7\%, while peak allocated GPU memory is almost
unchanged (median 0.86 vs 0.89 GiB). Candidates within a DP row are
independent and could be batched into one call, which would reduce the
number of sequential calls to $H$. The gain
thus comes at a cost that grows quadratically with the number of horizon
patches, and the method suits settings where accuracy matters more than
latency.

\begin{table}[htbp]
\caption{Accuracy and inference latency of the deployed policy with
Chronos-2 on one RTX 3090. Latency is the median over tasks of each task's
mean latency over its first, middle, and last official windows; slowdown
and $H(H{+}1)/2$ are geometric means over the tasks of each subset.}
\label{tab:cost}
\begin{center}
\small
\begin{tabular}{lccccc}
\toprule
 & skill & one-shot & adaptive & slowdown & $H(H{+}1)/2$ \\
\midrule
$H\ge2$ (66 tasks) & $+0.31\%$ & 0.25 s & 5.08 s & 13.7$\times$ & 12.6 \\
$H\ge4$ (36 tasks) & $+0.52\%$ & 0.31 s & 10.92 s & 42.3$\times$ & 39.8 \\
$H\ge9$ (19 tasks) & $+0.68\%$ & 0.25 s & 18.14 s & 103.6$\times$ & 96.6 \\
\bottomrule
\end{tabular}
\end{center}
\end{table}

\paragraph{Where the gain arises.}
Beyond depth, the gain of the Chronos-2 policy is concentrated in energy
tasks: on the 21 energy tasks with $H\ge2$ the skill is $+0.97\%$
[$+0.44$, $+1.58$] (12 wins, 6 ties, 3 losses), while the other domains
show small or no gains. Depth alone does not explain this, since the
mobility tasks are as deep (4 of 7 with $H\ge9$) yet show no gain. Across
tertiles of the one-shot MASE, the skill decreases from the easiest to
the hardest tertile ($+0.41\%$, $+0.26\%$, $+0.25\%$), but the hardest
tertile also holds fewer deep tasks, so this split does not separate
difficulty from depth. Per-domain results are in
Appendix~\ref{app:full-results}.

\section{Conclusion}

We formulated the decoding rollout schedule of frozen time-series
foundation models as a per-window inference-time decision problem, 
proposed a DP-based algorithm which makes the
exponential schedule space tractable with a prefix-retaining dynamic
program, and deployed the learned scorer through an agreement gate over
independently trained replicas that falls back to one-shot decoding
whenever the replicas disagree. The scorer is trained on historical windows
of the evaluated tasks, so the method adapts decoding to tasks whose history
is available rather than transferring to unseen tasks. On fev-bench with Chronos-2, the deployed
policy improves point-forecast skill over one-shot decoding with confidence
intervals that exclude zero on every nested subset of the benchmark, and both
the win rate and the skill gain grow as the forecast horizon deepens.
With Moirai-1.1-R-small, the confidence intervals also exclude zero on
every nested subset.
To the best of our knowledge, no fixed alternative offers this: none of the constant-chunk
schedules we evaluate improves on one-shot decoding, and the two
finest, including full autoregression, are significantly worse.

\makeatletter
\if@accepted
\subsubsection*{Acknowledgments}
This work was supported by Institute of Information \& Communications
Technology Planning \& Evaluation (IITP) grant funded by the Korea
government (MSIT) [No. \mbox{RS-2026-25524173}, Ultra-LongTerm Hierarchical Memory
and Reasoning Architecture for Next-Generation Omnimodal Agents, 40\%;
No. \mbox{RS-2026-25522672}, Development of Unified Reasoning Technology Mimicking
Human Cognition for Hierarchical Understanding and Unbounded Problem
Solving, 10\%; No. \mbox{RS-2020-II200882}, (SW STAR LAB) Development of
deployable learning intelligence via self-sustainable and trustworthy
machine learning, 10\%; and No. \mbox{RS-2021-II211343}, Artificial
Intelligence Graduate School Program (Seoul National University), 10\%],
and National Research Foundation of Korea (NRF)
grant funded by the Korea government (MSIT) (No. \mbox{RS-2024-00354036}, 30\%).
Hyun Oh Song is the corresponding author.
\fi
\makeatother

\bibliographystyle{tmlr}
\bibliography{references}

\appendix

\section{Training details}
\label{app:training}

\paragraph{Training data.}
For each task we collect candidate rollouts on historical windows that
strictly precede the official evaluation windows, running the DP of
Section~\ref{sec:dp} with randomized roll-in so that every prefix length is
visited once per window. Each visited DP row becomes one training
example, consisting of the $K$ candidate feature prefixes at that row
and, for each candidate, its prefix error measured against the known
ground truth. Training rows are thus collected along a randomized roll-in, whereas
at inference the scorer scores rows along its own trajectory.
Validation policies are likewise scored on the cached rows along the
fixed roll-in, a row-local proxy of the deployed closed-loop walk. Windows are capped at 10{,}000 per task; the 66 tasks with
$H\ge2$ contribute about 1.7M examples in total. A chronological split of
the historical windows provides the validation set; the official evaluation
windows are never touched during training or selection.

\paragraph{Objective and optimization.}
Within each DP row we form a target distribution by applying a softmax to
the negatives of the candidates' prefix errors, normalized per task and
scaled by a label temperature, and minimize the cross-entropy between the
scorer's softmax output and this target. Rows are batched by prefix length.
All scorers use the architecture of Section~\ref{sec:scorer} (width 256,
two encoder blocks per stage, four heads), AdamW with weight decay
$10^{-3}$, batch size 256, and 50 epochs; the checkpoint with the best validation improvement over one-shot (averaged per task) is selected.
Table~\ref{tab:configs} lists the hyperparameters of the deployed
selector and of the alternative configuration reported in
Appendix~\ref{app:full-results}; for each, five scorers are trained
that differ only in the random seed. The deployed configuration is the
one whose \emph{deployed policy} (the five-scorer ensemble with the
pre-fixed agreement threshold) performs best on the validation split:
65.2\% validation win rate and $+0.32\%$ validation skill, versus
63.6\% and $+0.22\%$ for the alternative.

\begin{table}[h]
\caption{Hyperparameters of the two selector configurations. All other
settings are shared; the deployed selector was chosen on validation
data.}
\label{tab:configs}
\begin{center}
\small
\begin{tabular}{lccc}
\toprule
 & label temperature & learning rate & dropout \\
\midrule
\textsc{Selector} (deployed) & 0.03 & $10^{-4}$ & 0.1 \\
\textsc{Selector} (alternative) & 0.1 & $3\times10^{-4}$ & 0.3 \\
\bottomrule
\end{tabular}
\end{center}
\end{table}

\paragraph{Second backbone.}
For Moirai-1.1-R-small, the training data are drawn from historical
windows that strictly precede the official evaluation windows, as for
Chronos-2, with at most 2{,}000 examples per task (122{,}884 in total).
Moirai runs with a context of 2{,}048 steps, without covariates, with a
fixed patch size of 16, and with the median of 5{,}000 samples as point
forecast, both for the training data and for evaluation. The scorer
architecture and optimization are those above, with Moirai's hidden width
of 384. The hyperparameters were selected on the validation split only.
We trained 72 configurations with one seed (label temperature in
$\{0.01, 0.03, 0.1, 0.3\}$, learning rate in $\{3\times10^{-5}, 10^{-4},
3\times10^{-4}\}$, dropout in $\{0.1, 0.3\}$, and the weight of an
additional loss term in $\{0, 0.3, 1\}$) and ranked them by the validation
improvement over one-shot. The four best configurations and the deployed
Chronos-2 configuration were trained with five seeds, and the
configuration whose deployed policy had the best validation skill was
chosen: label temperature 0.01, learning rate $10^{-4}$, dropout 0.1, and
weight 1 (validation skill $+0.60\%$). The additional loss term is the
expected logarithm, under the scorer's softmax output, of each
candidate's prefix error divided by that of the one-shot candidate, with
the ratios clipped to $[10^{-2},10^{2}]$ as in the skill.

\paragraph{Training on the policy's own states.}
Training rows follow a randomized roll-in, whereas the deployed policy
scores rows along its own trajectory (rows 1 and 2 do not depend on the
roll-in, so only tasks with $H\ge3$ are affected). To test whether training
on the policy's own states helps, we ran one round of dataset aggregation
\citep[DAgger;][]{ross2011reduction}: the deployed policy decoded every
training window of the 38 tasks with $H\ge3$, and the rows it visited were
added to the original training rows in a 1:1 ratio (for $H{=}2$ the
original rows are repeated, keeping the task weights). We retrained with
the deployed configuration, seeds, and checkpoint rule, and evaluated once
on the official windows (Table~\ref{tab:dagger}). Training on the policy's
own states raises the win rate on $H\ge4$ (69.4\% vs 62.5\%) and $H\ge9$
(76.3\% vs 71.1\%), but not the skill: no paired difference is
significant, and the point estimates are slightly lower. One round of
aggregation thus does not improve the method, although it does not rule
out gains from further rounds, and the mismatch between the training and
deployment states remains.

\begin{table}[H]
\caption{One round of dataset aggregation (DAgger) against the deployed
selector with Chronos-2, under the protocol of Table~\ref{tab:main}: skill
[95\% CI] (W/T/L), and the paired skill difference.}
\label{tab:dagger}
\begin{center}
\small
\setlength{\tabcolsep}{4pt}
\resizebox{\textwidth}{!}{%
\begin{tabular}{lccc}
\toprule
 & \textbf{$H\ge2$} (66 tasks) & \textbf{$H\ge4$} (36 tasks) & \textbf{$H\ge9$} (19 tasks) \\
\midrule
deployed & $\mathbf{+0.31}$ [$+0.10$, $+0.55$] (29/17/20) & $\mathbf{+0.52}$ [$+0.21$, $+0.87$] (19/7/10) & $\mathbf{+0.68}$ [$+0.17$, $+1.25$] (12/3/4) \\
DAgger, one round & $+0.14$ [$-0.13$, $+0.44$] (28/17/21) & $+0.29$ [$-0.13$, $+0.74$] (22/6/8) & $+0.56$ [$-0.16$, $+1.33$] (14/1/4) \\
DAgger $-$ deployed (paired) & $-0.17$ [$-0.40$, $+0.04$] & $-0.23$ [$-0.64$, $+0.13$] & $-0.12$ [$-0.51$, $+0.35$] \\
\bottomrule
\end{tabular}}
\end{center}
\end{table}

\section{Agreement gate: rationale and details}
\label{app:gate}

In our validation diagnostics, the margin between the top-ranked
candidate and the one-shot candidate is a poor guide: it barely
distinguishes helpful deviations from harmful ones. Agreement across
independently trained replicas does, so the gate votes rather than
thresholding a margin. Voting also bears on a structural property of the
candidate set. Because the one-shot rollout
is always a candidate, ground-truth selection is never worse than
one-shot decoding, while a scorer that mis-ranks a candidate can be.
A learned scorer forfeits this guarantee. The agreement gate does not
restore it; it makes deviations more conservative, and in our experiments
empirically more reliable (Table~\ref{tab:ablation}). It does so by
concentrating deviations where the ensemble is confident.

The deployed threshold, four of five, tolerates one dissenting
replica. It was fixed before any official evaluation rather than tuned
on validation, because the validation metric is only a row-local proxy
of the deployed closed-loop walk (Appendix~\ref{app:training}) and
cannot resolve adjacent thresholds. The ablation in
Appendix~\ref{app:full-results} varies the number of scorers $n$ and the
agreement threshold $m$ after the fact and finds the mechanism
insensitive to the exact choice when one dissent is allowed
($m=n-1$, $n\ge3$). At inference, the backbone's candidate
generation dominates the cost and is computed once, shared by all
scorers. The ensemble itself adds four extra forward passes of the
lightweight scorer per decision (3.4M parameters, against 120M for the
backbone), negligible relative to a single backbone call.

\section{Full results}
\label{app:full-results}

\subsection{Baselines}
\label{app:baselines}

\paragraph{Fixed-schedule baselines.}
Table~\ref{tab:fixed} evaluates deployable fixed rollout schedules
under the protocol of Table~\ref{tab:main}: fully autoregressive
decoding (chunks of one patch) and constant chunks of two and four
patches, rolled with the same pipeline as the selector. A task whose
horizon does not exceed the chunk coincides with one-shot decoding and
counts as a tie. None of these schedules improves on one-shot decoding
on any subset, and the deficit widens as the chunks become finer.
Chunks of one and two are significantly worse: every skill interval
lies below zero, and full autoregression loses all 19 tasks with
$H\ge9$ (skill $-10.7\%$). The best fixed alternative, a constant
chunk of four, is statistically indistinguishable from one-shot
($-0.16\%$ on the full benchmark, $-0.64\%$ on $H\ge9$, intervals
spanning zero), while still losing more tasks than it wins. At the
same time these schedules do win individual tasks (chunk four wins 9
of 19 on $H\ge9$), confirming that no single schedule dominates.

\begin{table}[H]
\caption{Deployable fixed rollout schedules under the protocol of
Table~\ref{tab:main}. A constant chunk of $c$ patches decodes the
horizon in consecutive $c$-patch calls, with a final shorter call
when $c$ does not divide $H$ ($c{=}1$ is fully autoregressive); tasks
with $H \le c$ coincide with one-shot decoding and tie.}
\label{tab:fixed}
\begin{center}
\small
\setlength{\tabcolsep}{4pt}
\resizebox{\textwidth}{!}{%
\begin{tabular}{lcccc}
\toprule
 & \multicolumn{2}{c}{\textbf{full} (100 tasks)} & \multicolumn{2}{c}{\textbf{$H\ge9$} (19 tasks)} \\
 & W/T/L (win rate) & skill [95\% CI] & W/T/L (win rate) & skill [95\% CI] \\
\midrule
fully autoregressive & 11/34/55 (28.0\%) & $-3.11$ [$-4.43$, $-1.95$] & 0/0/19 (0.0\%) & $-10.73$ [$-16.04$, $-6.28$] \\
constant chunk 2 & 11/62/27 (42.0\%) & $-1.18$ [$-1.89$, $-0.57$] & 4/0/15 (21.1\%) & $-4.76$ [$-7.71$, $-2.18$] \\
constant chunk 4 & 12/72/16 (48.0\%) & $-0.16$ [$-0.49$, $+0.13$] & 9/0/10 (47.4\%) & $-0.64$ [$-2.31$, $+0.81$] \\
\bottomrule
\end{tabular}}
\end{center}
\end{table}

\paragraph{Schedules chosen per task or per horizon.}
To separate per-window adaptation from adaptation across tasks, we also
choose a fixed schedule from one-shot decoding and constant chunks of 1,
2, and 4 patches on the validation windows, either per task (lowest total
validation MASE) or per horizon $H$ (lowest mean log MASE ratio over the
tasks with that $H$), and evaluate it once on the official windows
(Table~\ref{tab:fixed-task}). 
On validation, one-shot decoding is the best
schedule at every horizon, so the per-horizon choice reduces to one-shot
decoding. The per-task choice picks a chunked schedule for 17 of the 66
tasks, but these choices do not carry over to the official windows (7
wins, 49 ties, 10 losses on $H\ge2$). A fixed schedule chosen per task or
per horizon thus does not reproduce the gain of per-window selection.

\begin{table}[H]
\caption{Fixed schedules chosen on the validation windows per task or per
horizon, under the protocol of Table~\ref{tab:main}: skill [95\% CI], and
the paired skill difference of the deployed selector over the per-task
schedule.}
\label{tab:fixed-task}
\begin{center}
\small
\setlength{\tabcolsep}{4pt}
\begin{tabular}{lccc}
\toprule
 & \textbf{$H\ge2$} (66 tasks) & \textbf{$H\ge4$} (36 tasks) & \textbf{$H\ge9$} (19 tasks) \\
\midrule
deployed (per window) & $\mathbf{+0.31}$ [$+0.10$, $+0.55$] & $\mathbf{+0.52}$ [$+0.21$, $+0.87$] & $\mathbf{+0.68}$ [$+0.17$, $+1.25$] \\
per-task schedule & $-0.25$ [$-0.60$, $+0.04$] & $-0.41$ [$-1.05$, $+0.11$] & $-0.43$ [$-1.51$, $+0.37$] \\
per-horizon schedule & $0.00$ & $0.00$ & $0.00$ \\
deployed $-$ per-task (paired) & $\mathbf{+0.56}$ [$+0.19$, $+1.00$] & $\mathbf{+0.93}$ [$+0.31$, $+1.67$] & $\mathbf{+1.12}$ [$+0.15$, $+2.37$] \\
\bottomrule
\end{tabular}
\end{center}
\end{table}

\paragraph{Ensemble of candidate forecasts.}
A simpler way to use several schedules is to combine their forecasts
without a scorer. We combine the forecasts of one-shot decoding and the
constant chunks of 1, 2, and 4 patches of Table~\ref{tab:fixed}
pointwise, by their median or their mean, and evaluate the result on the
official windows (Table~\ref{tab:cand-ens}). Both ensembles are
significantly worse than one-shot decoding on every subset, so combining
the schedules without a scorer does not reproduce the gain of per-window
selection.

\begin{table}[H]
\caption{Pointwise ensembles of the forecasts of one-shot decoding and the
constant chunks of 1, 2, and 4 patches, without a scorer, under the
protocol of Table~\ref{tab:main}: skill [95\% CI]. Skill estimates whose
95\% confidence interval excludes zero are in bold.}
\label{tab:cand-ens}
\begin{center}
\small
\setlength{\tabcolsep}{4pt}
\begin{tabular}{lcccc}
\toprule
 & \textbf{full} & \textbf{$H\ge2$} & \textbf{$H\ge4$} & \textbf{$H\ge9$} \\
 & (100 tasks) & (66 tasks) & (36 tasks) & (19 tasks) \\
\midrule
deployed & $\mathbf{+0.20}$ [$+0.06$, $+0.37$] & $\mathbf{+0.31}$ [$+0.10$, $+0.55$] & $\mathbf{+0.52}$ [$+0.21$, $+0.87$] & $\mathbf{+0.68}$ [$+0.17$, $+1.25$] \\
pointwise median & $\mathbf{-0.49}$ [$-0.90$, $-0.14$] & $\mathbf{-0.74}$ [$-1.36$, $-0.23$] & $\mathbf{-1.33}$ [$-2.41$, $-0.40$] & $\mathbf{-2.01}$ [$-3.87$, $-0.43$] \\
pointwise mean & $\mathbf{-0.54}$ [$-0.94$, $-0.20$] & $\mathbf{-0.83}$ [$-1.42$, $-0.33$] & $\mathbf{-1.34}$ [$-2.35$, $-0.45$] & $\mathbf{-2.18}$ [$-3.94$, $-0.67$] \\
\bottomrule
\end{tabular}
\end{center}
\end{table}

\paragraph{Simpler scorers.}
To measure what the candidate comparison of the scorer contributes, we
train two simpler scorers with the same data, training configuration,
checkpoint rule, five seeds, and agreement gate as the deployed scorer
(3.4M parameters): (A) a pointwise MLP that scores each candidate feature
sequence on its own, with no attention and no comparison between
candidates (0.94M), and (B) a context-only scorer that reads only the
feature sequence of the one-shot candidate, which encodes the input
context, through Stage~1 of our scorer and an MLP head over the candidate
index (1.86M). Neither improves significantly on one-shot decoding over
the 66 tasks, and the deployed scorer is significantly better than both
on $H\ge2$ and $H\ge4$ (Table~\ref{tab:simple}). On $H\ge9$ the
context-only scorer keeps a small gain by deviating rarely (one loss in 19
tasks), so the context alone identifies a few safe deviations, but most of
the gain requires comparing the candidates.

\begin{table}[H]
\caption{Simpler scorers under the protocol of Table~\ref{tab:main}:
skill [95\% CI], and the paired skill difference of the deployed scorer
over each.}
\label{tab:simple}
\begin{center}
\small
\setlength{\tabcolsep}{4pt}
\begin{tabular}{lccc}
\toprule
 & \textbf{$H\ge2$} (66 tasks) & \textbf{$H\ge4$} (36 tasks) & \textbf{$H\ge9$} (19 tasks) \\
\midrule
deployed & $\mathbf{+0.31}$ [$+0.10$, $+0.55$] & $\mathbf{+0.52}$ [$+0.21$, $+0.87$] & $\mathbf{+0.68}$ [$+0.17$, $+1.25$] \\
(A) pointwise MLP & $+0.04$ [$-0.11$, $+0.21$] & $+0.16$ [$-0.08$, $+0.45$] & $+0.26$ [$-0.04$, $+0.74$] \\
(B) context-only & $+0.04$ [$-0.05$, $+0.17$] & $+0.13$ [$-0.00$, $+0.34$] & $\mathbf{+0.21}$ [$+0.01$, $+0.58$] \\
deployed $-$ (A) (paired) & $\mathbf{+0.27}$ [$+0.08$, $+0.49$] & $\mathbf{+0.36}$ [$+0.10$, $+0.67$] & $+0.42$ [$-0.04$, $+0.97$] \\
deployed $-$ (B) (paired) & $\mathbf{+0.27}$ [$+0.07$, $+0.50$] & $\mathbf{+0.39}$ [$+0.11$, $+0.71$] & $\mathbf{+0.47}$ [$+0.02$, $+0.98$] \\
\bottomrule
\end{tabular}
\end{center}
\end{table}

\subsection{Configuration, ensemble size, and transfer}
\label{app:robustness}

\paragraph{Alternative configuration.}
Table~\ref{tab:alt} reports the alternative selector configuration of
Table~\ref{tab:configs} under the protocol of Table~\ref{tab:main}. Its
skill also increases monotonically with horizon depth, and its skill
intervals exclude zero on the two deepest subsets, so the depth pattern
of Section~\ref{sec:analysis} is not specific to the deployed
hyperparameter setting.

\begin{table}[H]
\caption{The alternative selector configuration under the protocol of
Table~\ref{tab:main}. Skill estimates whose 95\% confidence interval
excludes zero are in bold.}
\label{tab:alt}
\begin{center}
\small
\setlength{\tabcolsep}{4pt}
\begin{tabular}{lcccc}
\toprule
 & \textbf{full} & \textbf{$H\ge2$} & \textbf{$H\ge4$} & \textbf{$H\ge9$} \\
 & (100 tasks) & (66 tasks) & (36 tasks) & (19 tasks) \\
\midrule
W/T/L (win rate) & 26/57/17 (54.5\%) & 26/23/17 (56.8\%) & 18/10/8 (63.9\%) & 10/4/5 (63.2\%) \\
skill [95\% CI] & $+0.12$ [$-0.00$, $+0.26$] & $+0.18$ [$-0.01$, $+0.40$] & $\mathbf{+0.33}$ [$+0.03$, $+0.68$] & $\mathbf{+0.58}$ [$+0.12$, $+1.16$] \\
\bottomrule
\end{tabular}
\end{center}
\end{table}

\paragraph{Ensemble size and agreement threshold.}
Table~\ref{tab:ablation} varies the number of scorers $n$ and the
agreement threshold $m$ of the deployed configuration, with members
added in a fixed seed order; $n{=}1$ is a single scorer with no gate.
Three patterns emerge. First, a single ungated scorer is high-variance
rather than weak. Across the five seeds, the full-benchmark win rate
ranges from 48.5\% to 54.0\%, and no skill interval excludes zero. On
$H\ge9$, four of the five seeds reach a larger point estimate than the
deployed policy, but with intervals two to three times wider, one
spanning zero from $-1.04$. The ensemble and gate therefore trade the
best seed's magnitude for reliability: intervals that exclude zero,
and a policy that does not depend on which seed was drawn.

Second, requiring unanimity filters the value away. Along the unanimity
cells, full-benchmark skill decreases monotonically with ensemble size
($+0.12$, $+0.10$, $+0.06$, $+0.03$ for $n{=}2,3,4,5$). Third, gates
that allow one dissent ($m{=}n{-}1$, $n\ge3$) are uniformly strong. All
three cells attain skill intervals that exclude zero on the full
benchmark, and several match or exceed the deployed cell's point
estimates. For example, $n{=}4$, $m{=}3$ reaches an 84\% win rate on
$H\ge9$, yet a paired comparison with the deployed policy shows no
measurable difference ($-0.01\%$ paired skill, 95\% CI $[-0.07,
+0.06]$). The mechanism is thus not sensitive to the exact $(n, m)$
among these gates, and the pre-fixed deployed choice is one of them.

\begin{table}[H]
\caption{Ensemble-size ablation under the protocol of
Table~\ref{tab:main}: $n$ scorers with agreement threshold $m$, members
added in a fixed seed order. W/T/L (win rate) and skill [95\% CI] on
the full benchmark and the $H\ge9$ subset. Skill estimates whose 95\% confidence interval excludes zero are in bold.}
\label{tab:ablation}
\begin{center}
\small
\setlength{\tabcolsep}{4pt}
\resizebox{\textwidth}{!}{%
\begin{tabular}{lcccc}
\toprule
 & \multicolumn{2}{c}{\textbf{full} (100 tasks)} & \multicolumn{2}{c}{\textbf{$H\ge9$} (19 tasks)} \\
 & W/T/L (win rate) & skill [95\% CI] & W/T/L (win rate) & skill [95\% CI] \\
\midrule
$n{=}1$, seed 42 & 29/41/30 (49.5\%) & $+0.17$ [$-0.11$, $+0.49$] & 12/1/6 (65.8\%) & $+1.19$ [$-0.05$, $+2.57$] \\
$n{=}1$, seed 43 & 29/39/32 (48.5\%) & $+0.20$ [$-0.12$, $+0.54$] & 13/1/5 (71.1\%) & $\mathbf{+1.60}$ [$+0.24$, $+3.01$] \\
$n{=}1$, seed 44 & 31/46/23 (54.0\%) & $+0.26$ [$-0.01$, $+0.56$] & 11/1/7 (60.5\%) & $+1.24$ [$-0.05$, $+2.49$] \\
$n{=}1$, seed 45 & 28/41/31 (48.5\%) & $+0.03$ [$-0.26$, $+0.31$] & 10/0/9 (52.6\%) & $+0.32$ [$-1.04$, $+1.62$] \\
$n{=}1$, seed 46 & 30/40/30 (50.0\%) & $+0.16$ [$-0.16$, $+0.51$] & 12/0/7 (63.2\%) & $+1.17$ [$-0.13$, $+2.62$] \\
\midrule
$n{=}2$, $m{=}2$ & 28/46/26 (51.0\%) & $+0.12$ [$-0.03$, $+0.30$] & 14/1/4 (76.3\%) & $\mathbf{+0.75}$ [$+0.12$, $+1.50$] \\
$n{=}3$, $m{=}2$ & 32/45/23 (54.5\%) & $\mathbf{+0.33}$ [$+0.09$, $+0.60$] & 13/1/5 (71.1\%) & $\mathbf{+1.42}$ [$+0.48$, $+2.50$] \\
$n{=}3$, $m{=}3$ & 28/54/18 (55.0\%) & $\mathbf{+0.10}$ [$+0.00$, $+0.20$] & 14/3/2 (81.6\%) & $\mathbf{+0.54}$ [$+0.16$, $+0.92$] \\
$n{=}4$, $m{=}3$ & 33/49/18 (57.5\%) & $\mathbf{+0.20}$ [$+0.04$, $+0.37$] & 15/2/2 (84.2\%) & $\mathbf{+0.69}$ [$+0.08$, $+1.34$] \\
$n{=}4$, $m{=}4$ & 28/55/17 (55.5\%) & $+0.06$ [$-0.02$, $+0.14$] & 13/3/3 (76.3\%) & $\mathbf{+0.36}$ [$+0.01$, $+0.72$] \\
$n{=}5$, $m{=}4$ (deployed) & 29/51/20 (54.5\%) & $\mathbf{+0.20}$ [$+0.06$, $+0.37$] & 12/3/4 (71.1\%) & $\mathbf{+0.68}$ [$+0.17$, $+1.25$] \\
$n{=}5$, $m{=}5$ & 24/59/17 (53.5\%) & $+0.03$ [$-0.04$, $+0.09$] & 10/5/4 (65.8\%) & $+0.17$ [$-0.09$, $+0.42$] \\
\bottomrule
\end{tabular}}
\end{center}
\end{table}

\paragraph{Seed subsets.}
Because members are added in one fixed seed order, each cell of
Table~\ref{tab:ablation} with $n=2$ to $4$ uses a single subset of the
five scorers. Table~\ref{tab:ablation-seeds} repeats these cells with
the four other cyclic orders, starting at seeds 43, 44, 45, and 46,
which gives five subsets per cell; for $n=4$ these are all five
subsets. Skill is positive for every subset, and no subset is
significantly worse than one-shot decoding on the full benchmark,
$H\ge2$, or $H\ge9$. The gate matters more than the choice of seeds:
with one dissent allowed ($m=n-1$), the full-benchmark interval
excludes zero for 9 of the 10 subsets, against 5 of the 15 subsets with
unanimous gates.

\begin{table}[htbp]
\caption{Ensemble-size ablation over seed subsets, under the protocol of
Table~\ref{tab:main}. Each cell is evaluated on five subsets of the five
scorers, one per cyclic seed order: skill median (minimum to maximum),
and the number of subsets whose 95\% confidence interval lies above zero.}
\label{tab:ablation-seeds}
\begin{center}
\small
\setlength{\tabcolsep}{4pt}
\begin{tabular}{lcccc}
\toprule
 & \multicolumn{2}{c}{\textbf{full} (100 tasks)} & \multicolumn{2}{c}{\textbf{$H\ge9$} (19 tasks)} \\
 & skill, median (min to max) & CI $>0$ & skill, median (min to max) & CI $>0$ \\
\midrule
$n{=}2$, $m{=}2$ & $+0.14$ ($+0.12$ to $+0.15$) & 2/5 & $+0.61$ ($+0.42$ to $+0.75$) & 2/5 \\
$n{=}3$, $m{=}2$ & $+0.28$ ($+0.16$ to $+0.33$) & 4/5 & $+1.11$ ($+0.94$ to $+1.42$) & 4/5 \\
$n{=}3$, $m{=}3$ & $+0.09$ ($+0.06$ to $+0.17$) & 2/5 & $+0.41$ ($+0.18$ to $+0.54$) & 3/5 \\
$n{=}4$, $m{=}3$ & $+0.21$ ($+0.18$ to $+0.25$) & 5/5 & $+0.69$ ($+0.61$ to $+1.05$) & 4/5 \\
$n{=}4$, $m{=}4$ & $+0.05$ ($+0.03$ to $+0.11$) & 1/5 & $+0.27$ ($+0.11$ to $+0.36$) & 2/5 \\
$n{=}5$, $m{=}4$ (deployed) & $+0.20$ & 1/1 & $+0.68$ & 1/1 \\
\bottomrule
\end{tabular}
\end{center}
\end{table}

\paragraph{Transfer to unseen tasks.}
The scorer is trained on historical windows of the evaluated tasks, and
transfer to unseen tasks is not part of our claim. To make this scope
concrete, we split the 66 tasks with $H\ge2$ into five folds. For each
fold, five scorers are trained on the other four folds (80\% of the
tasks) with the same configuration and agreement gate, with checkpoints
selected on their validation windows, and evaluated on the official
windows of the held-out fold (Table~\ref{tab:transfer}). On unseen tasks
the scorer does not improve significantly on one-shot decoding at any
depth, and it is significantly worse than the deployed scorer on $H\ge4$
and $H\ge9$, where the gains are concentrated. The gain thus depends on
the history of the evaluated tasks.

\begin{table}[H]
\caption{Scorers evaluated on tasks unseen in training (five folds of the
66 tasks with $H\ge2$), under the protocol of Table~\ref{tab:main}: skill
[95\% CI] (W/T/L), and the paired skill difference to the deployed
selector.}
\label{tab:transfer}
\begin{center}
\small
\setlength{\tabcolsep}{4pt}
\resizebox{\textwidth}{!}{%
\begin{tabular}{lccc}
\toprule
 & \textbf{$H\ge2$} (66 tasks) & \textbf{$H\ge4$} (36 tasks) & \textbf{$H\ge9$} (19 tasks) \\
\midrule
deployed & $\mathbf{+0.31}$ [$+0.10$, $+0.55$] (29/17/20) & $\mathbf{+0.52}$ [$+0.21$, $+0.87$] (19/7/10) & $\mathbf{+0.68}$ [$+0.17$, $+1.25$] (12/3/4) \\
unseen tasks & $+0.17$ [$-0.08$, $+0.48$] (17/20/29) & $+0.05$ [$-0.21$, $+0.33$] (10/10/16) & $-0.01$ [$-0.48$, $+0.48$] (6/4/9) \\
unseen $-$ deployed (paired) & $-0.14$ [$-0.41$, $+0.18$] & $\mathbf{-0.47}$ [$-0.81$, $-0.17$] & $\mathbf{-0.69}$ [$-1.29$, $-0.17$] \\
\bottomrule
\end{tabular}}
\end{center}
\end{table}

\subsection{Depth, domain, and selected schedules}
\label{app:where}

\paragraph{Disjoint depth bins.}
Table~\ref{tab:bins} reports the deployed policy on disjoint bins of the
66 tasks with $H\ge2$, under the protocol of Table~\ref{tab:main}; the
$H\ge9$ bin is the $H\ge9$ subset of Table~\ref{tab:main}. For
Chronos-2 it also gives the skill of ground-truth selection within the
same dynamic program (DP oracle) on each bin, and the share of it that
the deployed policy recovers. Section~\ref{sec:analysis} discusses the
pattern.

\begin{table}[H]
\caption{The deployed selector on disjoint depth bins of the 66 tasks
with $H\ge2$, under the protocol of Table~\ref{tab:main}. DP oracle is
the skill of ground-truth selection within the same dynamic program, and
recovery is the deployed skill divided by it (Chronos-2 only). Skill
estimates whose 95\% confidence interval excludes zero are in bold.}
\label{tab:bins}
\begin{center}
\small
\setlength{\tabcolsep}{4pt}
\begin{tabular}{llcccc}
\toprule
 & $H$ & W/T/L (win rate) & skill [95\% CI] & DP oracle & recovery \\
\midrule
\multirow{3}{*}{Chronos-2} & 2--3 (30 tasks) & 10/10/10 (50.0\%) & $+0.05$ [$-0.16$, $+0.39$] & $+2.29$ & 2.4\% \\
 & 4--8 (17 tasks) & 7/4/6 (52.9\%) & $\mathbf{+0.34}$ [$+0.04$, $+0.73$] & $+4.53$ & 7.5\% \\
 & $\ge9$ (19 tasks) & 12/3/4 (71.1\%) & $\mathbf{+0.68}$ [$+0.17$, $+1.25$] & $+9.13$ & 7.5\% \\
\midrule
\multirow{3}{*}{Moirai} & 2--3 (30 tasks) & 15/3/12 (55.0\%) & $+0.25$ [$-0.16$, $+0.69$] & --- & --- \\
 & 4--8 (17 tasks) & 10/3/4 (67.6\%) & $\mathbf{+0.59}$ [$+0.14$, $+1.16$] & --- & --- \\
 & $\ge9$ (19 tasks) & 11/3/5 (65.8\%) & $\mathbf{+0.28}$ [$+0.07$, $+0.51$] & --- & --- \\
\bottomrule
\end{tabular}
\end{center}
\end{table}

\paragraph{Gain by domain and difficulty.}
Table~\ref{tab:domain} breaks down the deployed policy with Chronos-2 on
the 66 tasks with $H\ge2$ by the fev-bench domain of each task (all econ
tasks have $H{=}1$), as assigned by \citet[Appendix~A]{shchur2025fev}, and
by tertiles of the one-shot MASE on the official windows;
Section~\ref{sec:analysis} discusses the pattern.

\begin{table}[H]
\caption{The deployed selector with Chronos-2 by fev-bench domain and by
tertiles of the one-shot MASE (22 tasks each), on the 66 tasks with
$H\ge2$: skill [95\% CI].}
\label{tab:domain}
\begin{center}
\small
\setlength{\tabcolsep}{4pt}
\begin{tabular}{lccc}
\toprule
 & tasks ($H\ge4$) & W/T/L & skill [95\% CI] \\
\midrule
energy & 21 (12) & 12/6/3 & $\mathbf{+0.97}$ [$+0.44$, $+1.58$] \\
cloud & 20 (15) & 7/8/5 & $+0.11$ [$-0.03$, $+0.28$] \\
retail & 9 (2) & 5/0/4 & $-0.04$ [$-0.18$, $+0.08$] \\
mobility & 7 (5) & 3/0/4 & $-0.07$ [$-0.33$, $+0.10$] \\
nature & 5 (2) & 1/3/1 & $-0.03$ [$-0.85$, $+0.75$] \\
healthcare & 4 (0) & 1/0/3 & $-0.32$ [$-0.69$, $+0.01$] \\
\midrule
easiest one-shot MASE tertile & 22 (15) & 7/6/9 & $\mathbf{+0.41}$ [$+0.05$, $+0.86$] \\
middle tertile & 22 (14) & 12/6/4 & $\mathbf{+0.26}$ [$+0.02$, $+0.58$] \\
hardest tertile & 22 (7) & 10/5/7 & $+0.25$ [$-0.16$, $+0.78$] \\
\bottomrule
\end{tabular}
\end{center}
\end{table}

\paragraph{Selected schedules.}
Table~\ref{tab:schedules} summarizes the per-row decisions of the
deployed policy with Chronos-2 on all official windows of the 66 tasks
with $H\ge2$, computed per task and then summarized over tasks. The
policy is selective: one-shot decoding remains the default, and on a
median task only 5 to 12\% of series-window pairs deviate. Schedules become more
fragmented with depth, yet full autoregression is never chosen for
$H\ge9$, consistent with the significantly worse finest fixed schedules
(Table~\ref{tab:fixed}). Among two-chunk schedules on longer horizons, a
shorter first chunk is about twice as frequent as a longer one, and
schedules vary across series and windows of the same task, supporting
input-dependent selection.

\begin{table}[H]
\caption{Rollout schedules selected by the deployed policy with Chronos-2
on the official windows. The policy selects a schedule for each series in
each official window; a deviating pair is a series-window pair whose final
schedule is not one-shot decoding, and a chunk is one decoded block of the
schedule.}
\label{tab:schedules}
\begin{center}
\small
\setlength{\tabcolsep}{4pt}
\begin{tabular}{lcccc}
\toprule
 & tasks & deviating pairs per task & chunks per & distinct schedules per \\
$H$ & (deviating) & median / max & deviating schedule & deviating task, median \\
\midrule
2--3 & 30 (20) & 5.1\% / 57.4\% & 2.0 & 1 \\
4--8 & 17 (13) & 12.4\% / 70.0\% & 2.6 & 5 \\
$\ge9$ & 19 (16) & 5.9\% / 35.0\% & 3.2 & 10.5 \\
\bottomrule
\end{tabular}
\end{center}
\end{table}

\subsection{Probabilistic metrics}
\label{app:prob}

Our claims concern point accuracy. As a check, Table~\ref{tab:prob}
reports the two probabilistic metrics of fev-bench, the scaled quantile
loss (SQL) and the weighted quantile loss (WQL), for the deployed policy
with Chronos-2. The quantiles of each chunk are taken from the backbone
call that produced it. The change is not significant on the full
benchmark or on $H\ge2$, and the probabilistic metrics are slightly
worse on some depth bins. We therefore make no claim for probabilistic
forecasts.

\begin{table}[H]
\caption{Probabilistic metrics of the deployed selector with Chronos-2
against one-shot decoding, under the protocol of Table~\ref{tab:main}
with SQL or WQL in place of MASE: W/T/L on SQL and skill [95\% CI].
The depth bins are those of Table~\ref{tab:bins}. Skill estimates whose
95\% confidence interval excludes zero are in bold.}
\label{tab:prob}
\begin{center}
\small
\setlength{\tabcolsep}{4pt}
\begin{tabular}{lcccc}
\toprule
 & tasks & W/T/L (SQL) & SQL skill [95\% CI] & WQL skill [95\% CI] \\
\midrule
full & 100 & 14/51/35 & $-0.10$ [$-0.24$, $+0.04$] & $-0.07$ [$-0.21$, $+0.07$] \\
$H\ge2$ & 66 & 14/17/35 & $-0.15$ [$-0.37$, $+0.06$] & $-0.11$ [$-0.32$, $+0.11$] \\
\midrule
$H=2$--$3$ & 30 & 3/10/17 & $-0.29$ [$-0.56$, $+0.00$] & $\mathbf{-0.16}$ [$-0.28$, $-0.03$] \\
$H=4$--$8$ & 17 & 2/4/11 & $\mathbf{-0.25}$ [$-0.47$, $-0.07$] & $\mathbf{-0.38}$ [$-0.80$, $-0.02$] \\
$H\ge9$ & 19 & 9/3/7 & $+0.16$ [$-0.43$, $+0.68$] & $+0.22$ [$-0.38$, $+0.82$] \\
\bottomrule
\end{tabular}
\end{center}
\end{table}

\subsection{Published models and per-task results}
\label{app:published}

\paragraph{Position relative to published models.}
Tables~\ref{tab:pairwise} and~\ref{tab:pairwise-skill} place the
deployed selector in the context of the published fev-bench
leaderboard, using the benchmark's official pairwise win-rate and
skill computations. The column models are
TiRex~\citep{auer2025tirex}, TimesFM-2.5, a later release of
TimesFM~\citep{das2024timesfm}, Toto-1.0~\citep{cohen2025toto},
TabPFN-TS~\citep{hoo2025tables}, the benchmark's statistical-ensemble
and seasonal-naive baselines~\citep{shchur2025fev},
Sundial~\citep{liu2025sundial}, Moirai-2.0~\citep{liu2025moirai2},
and Chronos-Bolt, a later, faster release in the
Chronos family~\citep{ansari2024chronos}. Because the backbone already dominates
every published model, the columns against third models inherit that
dominance essentially unchanged; the informative column is the
comparison against one-shot Chronos-2 itself.

\begin{table}[H]
\caption{Pairwise win rate (\%) between the row model and the column model
on all 100 tasks, computed with the official fev-bench recipe (MASE).
Columns against models other than one-shot Chronos-2 largely reflect the
frozen backbone shared by both rows. All cells here follow the
benchmark's cross-model recipe; the paired within-pipeline comparison
against one-shot decoding is in Table~\ref{tab:main}.}
\label{tab:pairwise}
\begin{center}
\small
\resizebox{\textwidth}{!}{%
\begin{tabular}{lccccccccccc}
\toprule
vs $\rightarrow$ & \textsc{Selector} & one-shot & TiRex & TimesFM-2.5 & Toto-1.0 & TabPFN-TS & Stat.\ Ens. & Sundial & Moirai-2.0 & Chronos-Bolt & Seas.\ Naive \\
\midrule
one-shot Chronos-2 & 45.0 & --- & 72.0 & 71.0 & 74.0 & 80.0 & 86.0 & 91.0 & 92.0 & 94.0 & 98.0 \\
\textsc{Selector}  & ---  & \textbf{55.0} & 72.0 & 71.0 & 74.0 & 81.0 & 86.0 & 92.0 & 91.0 & 94.0 & 98.0 \\
\bottomrule
\end{tabular}}
\end{center}
\end{table}

\begin{table}[H]
\caption{Pairwise skill (\%) of the row model over the column model on
all 100 tasks, computed with the official fev-bench recipe (MASE);
positive values favor the row model. As in Table~\ref{tab:pairwise},
columns against models other than one-shot Chronos-2 largely reflect
the shared frozen backbone.}
\label{tab:pairwise-skill}
\begin{center}
\small
\resizebox{\textwidth}{!}{%
\begin{tabular}{lccccccccccc}
\toprule
vs $\rightarrow$ & \textsc{Selector} & one-shot & TiRex & TimesFM-2.5 & Toto-1.0 & TabPFN-TS & Stat.\ Ens. & Sundial & Moirai-2.0 & Chronos-Bolt & Seas.\ Naive \\
\midrule
one-shot Chronos-2 & $-0.2$ & --- & $+7.8$ & $+7.6$ & $+10.1$ & $+8.1$ & $+22.8$ & $+14.3$ & $+11.4$ & $+12.2$ & $+35.5$ \\
\textsc{Selector} & --- & $\mathbf{+0.2}$ & $+8.0$ & $+7.8$ & $+10.3$ & $+8.3$ & $+23.0$ & $+14.5$ & $+11.5$ & $+12.4$ & $+35.6$ \\
\bottomrule
\end{tabular}}
\end{center}
\end{table}

\paragraph{Per-task results.}
Table~\ref{tab:pertask} lists, for every task with $H\ge2$, the official
MASE of one-shot decoding, of the deployed selector, and of the two
ground-truth-guided searches of Section~\ref{sec:dp}: the
prefix-retaining surrogate (DP oracle) and exhaustive search over all
$2^{H-1}$ rollouts (full oracle). 
Both oracles select with access to the ground truth and serve as oracle performance references for our learned selection procedure.
The full oracle is computed for the 58 tasks with $H\le11$; exhaustive
enumeration is infeasible at $H{=}18$.

{\small
\setlength{\tabcolsep}{5pt}
\begin{longtable}{lrr|rrr}
\caption{Per-task official MASE on the 66 tasks with $H\ge2$ ($H$ in
patches). Lower is better; the oracle columns use ground-truth
selection. Values are shown to five decimals so that every deviation
of the selector from one-shot decoding is visible: the selector value
is in bold where it improves on one-shot, a tie in
Table~\ref{tab:main} is an exact equality of the two columns, and the
W/T/L counts there can be read off directly.}
\label{tab:pertask}\\[0.6em]
\toprule
task & $H$ & one-shot & selector & DP oracle & full oracle \\
\midrule
\endfirsthead
\toprule
task & $H$ & one-shot & selector & DP oracle & full oracle \\
\midrule
\endhead
\midrule
\endfoot
\bottomrule
\endlastfoot
bizitobs\_l2c\_5T & 18 & 0.51692 & $\mathbf{0.51124}$ & 0.47539 & --- \\
boomlet\_1062 & 18 & 0.67769 & 0.67769 & 0.67639 & --- \\
boomlet\_1209 & 18 & 0.79877 & $\mathbf{0.79681}$ & 0.75308 & --- \\
boomlet\_1230 & 18 & 1.37049 & 1.37049 & 1.28606 & --- \\
boomlet\_1487 & 18 & 0.51066 & 0.51087 & 0.47699 & --- \\
kdd\_cup\_2022\_10T & 18 & 0.51992 & 0.52002 & 0.45892 & --- \\
LOOP\_SEATTLE\_5T & 18 & 0.68120 & $\mathbf{0.67968}$ & 0.54592 & --- \\
redset\_5T & 18 & 0.78646 & $\mathbf{0.78501}$ & 0.75097 & --- \\
entsoe\_1H & 11 & 0.53796 & $\mathbf{0.51843}$ & 0.45525 & 0.45053 \\
ercot\_1H & 11 & 1.35753 & $\mathbf{1.34969}$ & 1.23670 & 1.21879 \\
ETT\_1H & 11 & 1.12596 & 1.12596 & 1.08755 & 1.08218 \\
LOOP\_SEATTLE\_1H & 11 & 0.79393 & $\mathbf{0.79384}$ & 0.76101 & 0.75875 \\
M\_DENSE\_1H & 11 & 0.71320 & $\mathbf{0.71250}$ & 0.67841 & 0.67442 \\
proenfo\_gfc12 & 11 & 0.81200 & $\mathbf{0.79233}$ & 0.72557 & 0.72277 \\
proenfo\_gfc14 & 11 & 0.53869 & $\mathbf{0.52413}$ & 0.44547 & 0.44338 \\
proenfo\_gfc17 & 11 & 0.61039 & $\mathbf{0.59899}$ & 0.46396 & 0.45733 \\
SZ\_TAXI\_1H & 11 & 0.48788 & 0.48793 & 0.47630 & 0.47496 \\
uci\_air\_quality\_1H & 11 & 1.02060 & 1.03514 & 0.95585 & 0.95125 \\
jena\_weather\_10T & 9 & 0.43091 & $\mathbf{0.42552}$ & 0.39094 & 0.38737 \\
boomlet\_1631 & 6 & 0.71304 & 0.71309 & 0.70575 & 0.70501 \\
boomlet\_1676 & 6 & 0.70742 & 0.70742 & 0.70014 & 0.69989 \\
entsoe\_15T & 6 & 0.58461 & $\mathbf{0.57875}$ & 0.49605 & 0.49524 \\
entsoe\_30T & 6 & 0.54382 & $\mathbf{0.53787}$ & 0.48130 & 0.47818 \\
ETT\_15T & 6 & 0.69486 & $\mathbf{0.69228}$ & 0.66715 & 0.66371 \\
kdd\_cup\_2022\_30T & 6 & 0.54194 & 0.54203 & 0.51552 & 0.51411 \\
redset\_15T & 6 & 0.93209 & $\mathbf{0.92267}$ & 0.88923 & 0.88809 \\
solar\_with\_weather\_15T & 6 & 0.86410 & $\mathbf{0.84079}$ & 0.76716 & 0.76678 \\
SZ\_TAXI\_15T & 6 & 0.49999 & 0.50023 & 0.49222 & 0.49137 \\
boomlet\_1225 & 4 & 0.23225 & 0.23225 & 0.23223 & 0.23223 \\
boomlet\_1282 & 4 & 0.52286 & 0.52500 & 0.49606 & 0.49556 \\
boomlet\_285 & 4 & 0.35877 & $\mathbf{0.35757}$ & 0.35592 & 0.35584 \\
boomlet\_619 & 4 & 0.42342 & 0.42372 & 0.41592 & 0.41562 \\
boomlet\_772 & 4 & 0.33131 & 0.33131 & 0.33045 & 0.33045 \\
boomlet\_963 & 4 & 0.83903 & 0.83903 & 0.83162 & 0.82263 \\
hermes & 4 & 0.77611 & $\mathbf{0.77484}$ & 0.74209 & 0.74162 \\
rohlik\_orders\_1D & 4 & 1.18691 & 1.19121 & 1.10323 & 1.10196 \\
rossmann\_1D & 3 & 0.35576 & 0.35712 & 0.34481 & 0.34477 \\
walmart & 3 & 0.81674 & 0.81733 & 0.78768 & 0.78746 \\
bizitobs\_l2c\_1H & 2 & 0.38412 & 0.38591 & 0.38134 & 0.38134 \\
boomlet\_1855 & 2 & 0.53818 & 0.53818 & 0.53744 & 0.53744 \\
boomlet\_1975 & 2 & 0.16074 & $\mathbf{0.16026}$ & 0.15884 & 0.15884 \\
boomlet\_2187 & 2 & 0.81185 & 0.81185 & 0.81084 & 0.81084 \\
epf\_be & 2 & 0.64703 & 0.64703 & 0.62345 & 0.62345 \\
epf\_de & 2 & 0.60423 & 0.60423 & 0.54599 & 0.54599 \\
epf\_fr & 2 & 0.45006 & 0.45006 & 0.42580 & 0.42580 \\
epf\_np & 2 & 0.85019 & 0.85019 & 0.77631 & 0.77631 \\
epf\_pjm & 2 & 0.47190 & 0.47190 & 0.46592 & 0.46592 \\
ercot\_1D & 2 & 1.13827 & $\mathbf{1.09178}$ & 1.08174 & 1.08174 \\
ETT\_1D & 2 & 1.34138 & 1.34501 & 1.33291 & 1.33291 \\
favorita\_stores\_1D & 2 & 1.14020 & $\mathbf{1.14003}$ & 1.12550 & 1.12550 \\
favorita\_transactions\_1D & 2 & 0.84937 & $\mathbf{0.84880}$ & 0.83052 & 0.83052 \\
hierarchical\_sales\_1D & 2 & 0.68895 & $\mathbf{0.68890}$ & 0.68465 & 0.68465 \\
hospital\_admissions\_1D & 2 & 0.71705 & 0.71708 & 0.71651 & 0.71651 \\
jena\_weather\_1D & 2 & 1.40356 & 1.40356 & 1.39743 & 1.39743 \\
jena\_weather\_1H & 2 & 0.43739 & 0.43739 & 0.43307 & 0.43307 \\
LOOP\_SEATTLE\_1D & 2 & 0.95995 & 0.96000 & 0.95058 & 0.95058 \\
m5\_1D & 2 & 0.87988 & $\mathbf{0.87792}$ & 0.87224 & 0.87224 \\
M\_DENSE\_1D & 2 & 0.77215 & 0.77816 & 0.75953 & 0.75953 \\
redset\_1H & 2 & 1.50300 & $\mathbf{1.50157}$ & 1.48424 & 1.48424 \\
restaurant & 2 & 0.85991 & 0.86043 & 0.85366 & 0.85366 \\
solar\_1D & 2 & 0.75647 & $\mathbf{0.75645}$ & 0.75363 & 0.75363 \\
solar\_with\_weather\_1H & 2 & 1.02015 & $\mathbf{1.01945}$ & 0.98984 & 0.98984 \\
uci\_air\_quality\_1D & 2 & 1.31321 & 1.31321 & 1.29274 & 1.29274 \\
uk\_covid\_nation\_1D & 2 & 2.50044 & 2.52344 & 2.45113 & 2.45113 \\
uk\_covid\_nation\_1D\_cum & 2 & 9.44670 & 9.48487 & 8.96148 & 8.96148 \\
uk\_covid\_utla\_1D & 2 & 4.54598 & $\mathbf{4.54473}$ & 4.46456 & 4.46456 \\
\end{longtable}}

\end{document}